\newcommand{\hide}[1]{}
\newtheorem{lemma}{Lemma}
\newtheorem{theorem}{Theorem}
\newtheorem{corollary}{Corollary}
\def\R{\mathbb{R}}
\def\inv{\mathrm{inv}}
\def\im{\mathrm{Im}}
\def\proj{\mathbf{P}}
\def\A{\mathcal{A}}
\renewcommand{\ker}{\mathrm{cker}}
\def\casvd{\mathsf{CA\textrm-SVD}\xspace}
\def\gkvsvd{\mathsf{GKV\textrm-SVD}\xspace}
\def\expm{\mathsf{EM}}
\def\M{\mathcal{M}}
\def\TV{\mathrm{TV}}
\def\Min{M^-}
\def\Mout{M^+}
\def\Mcut{Q}
\renewcommand{\dagger}[0]{+}
\newcommand{\spara}[1]{\paragraph{#1}}
\newcommand{\fabian}[1]{\textcolor{red}{[Fabian: #1]}}
\newcommand{\labis}[1]{\textcolor{red}{[Labis: #1]}}
\def\negfigsp{\vspace{0pt}}
\title{Learning Mixtures of Markov Chains with Quality Guarantees}
\date{October 2022}
\author{Fabian Spaeh\\ fspaeh@bu.edu \\ Boston University \and Charalampos E. Tsourakakis\\ ctsourak@bu.edu \\ Boston University}
\begin{document}

\maketitle

\begin{abstract}
A large number of modern applications ranging from listening songs online and browsing the Web to using a navigation app on a smartphone generate a plethora of user trails. Clustering such trails into groups with a common sequence pattern can reveal significant structure in human behavior that can lead to improving user experience through better recommendations, and even prevent suicides~\cite{lai2014caught}.  One approach to modeling this problem mathematically is as a mixture of Markov chains.  Recently, Gupta, Kumar and Vassilvitski~\cite{gupta2016mixtures} introduced an algorithm ($\gkvsvd$) based on the singular value decomposition (SVD) that under certain conditions can perfectly recover a mixture of $L$ chains on $n$ states, given only the distribution of trails of length 3 (3-trail).

In this work we contribute to the problem of unmixing Markov chains by highlighting and addressing two important constraints of the $\gkvsvd$ algorithm~\cite{gupta2016mixtures}: some chains in the mixture may not even be weakly connected, and secondly in practice one does not know beforehand the true number of chains. We resolve these issues in the Gupta et al. paper~\cite{gupta2016mixtures}. Specifically, we propose an algebraic criterion that enables us to choose a value of $L$ efficiently that avoids overfitting. Furthermore, we design a reconstruction algorithm that outputs the true mixture in the presence of disconnected chains and is robust to noise. We complement our theoretical results with experiments on both synthetic and real data, where we observe that our method outperforms the $\gkvsvd$ algorithm.  Finally, we empirically observe that combining an EM-algorithm with our method performs best in practice, both in terms of reconstruction error with respect to the distribution of 3-trails and the mixture  of Markov Chains. 
\end{abstract}

\section{Introduction}
\label{sec:intro} 
Since the seminal mathematical work of Markov~\cite{markov1906rasprostranenie} in the early 1900s, Markov chains~\cite{levin2017markov} have been central to computer science, physics, engineering, chemistry, and many more fields. For example, modeling user behavior on the Web primarily assumes Markovian behavior mainly due to Google's successful Pagerank algorithm~\cite{page1999pagerank}; Web pages are represented as states and hyperlinks as probabilities of navigating from one page to another. Despite evidence that this assumption is not fully accurate~\cite{chierichetti2012web}, Markov chains of higher order are used to represent the existence of users' memory~\cite{singer2014detecting}.   

Recently, Gupta, Kumar and Vassilvitski studied the problem of recovering a mixture of Markov Chains (MCs) from observations~\cite{gupta2016mixtures}.  Recovering a single Markov chain from observations is straightforward; the empirical starting distribution and the empirical transition probabilities form the maximum likelihood Markov chain. 
Despite a large body of related work on unmixing distributions~\cite{titterington1985statistical,lindsay1995mixture} such as Gaussian mixtures~\cite{chaudhuri2009learning,sanjeev2001learning,dasgupta1999learning} or mixture of DAGs~\cite{gordon2021identifying}, and the significant set of related applications~\cite{kohjima2021learning,gupta2016mixtures,singer2014detecting,becker2018understanding,wu2017retrospective,batu2004} the problem of unraveling mixtures of Markov chains was not studied until recently~\cite{gupta2016mixtures}.

Formally, we define a mixture $\M$ as the tuple $(M^1, M^2, \dots, M^L)$ of Markov chains, each given as a stochastic matrix of transition probabilities. Each chain $\ell \in [L]$ is associated with a vector of starting probabilities
$s^\ell \in \mathbb R^{n}$ such that $\sum_{\ell=1}^L \sum_{i=1}^n s^\ell_i = 1$. The goal is to learn  $\M$, i.e., the transition matrices and starting probabilities, from observations. Gupta et al.~\cite{gupta2016mixtures} prove that learning the mixture from trails of length 3 (3-trail) suffices for perfect reconstruction.  We sample a 3-trail $i \to j \to k$ from $\M$ as follows: We sample the starting state $i \in [n]$ and a chain $\ell \in [L]$ with probability $s^\ell_i$. From $i$, we travel through two more states $j$ and $k$, with transition probabilities from the chain $M^\ell$. In particular, the probability of obtaining a 3-trail $i \to j \to k$ also denoted as  the ordered tuple $(i, j, k)$ is $ p(i, j, k) \coloneqq    \sum_{\ell=1}^L s^\ell_i \cdot M^\ell_{ij} \cdot M^\ell_{jk}.$
 
Despite the fact that there exist solutions within the machine learning literature, Gupta et al. proved that under certain conditions (see Section~\ref{sec:rel}) one can perfectly recover the mixture, more time and space efficiently. In other words, their proposed algorithm $\gkvsvd$ compared to existing solutions, combines both strong theoretical guarantees and better efficiency. For example, one could use the EM algorithm~\cite{dempster1977maximum,wu1983convergence} to locally optimize the likelihood of the mixture, or learn a mixture of Dirichlet distributions~\cite{subakan2013probabilistic} without any strong theoretical guarantees on the quality of the output. Alternatively, one can use the moment based techniques that rely on  tensor and matrix decompositions to provably learn (under certain conditions) a mixture of Hidden Markov Models~\cite{anandkumar2012method,anandkumar2014tensor,subakan2013probabilistic} or Markov Chains~\cite[Section~4.3]{subakan2013probabilistic}. The method proposed by Gupta et al.~\cite{gupta2016mixtures} is significantly more efficient and scalable than the latter set of techniques which rely on 5-trails, and thus their sample complexity grows as $n^5$ rather than $n^3$ as the $\gkvsvd$ algorithm~\cite{gupta2016mixtures}. As the length of the trail grows, the recovery problem becomes easier. For example, if the length of a sample trail is large enough, then it suffices to learn the corresponding chain in the mixture.
As the length of the trail grows, the recovery problem becomes easier. For example, if the length of a sample trail is large enough, then we can immediately learn its underlying chain in the mixture. Kausik et al.~\cite{kausik2022learning} study a regime where the length of the samples is asymptotically at least as large as the worst mixing time of any chain in the mixture, and show that one can group trails by the chains they were sampled from and learn the mixture. 

In this work we focus on improving the seminal work of  Gupta et al.~\cite{gupta2016mixtures} in various ways. Gupta et al. require certain non-trivial conditions for reconstruction, stated as a requirement on the rank of a special matrix $\mathcal{A}$ that we discuss in greater detail in Section~\ref{sec:rel}. However, these conditions implicitly require all chains of the mixture to be connected.
Figure~\ref{fig:cex}(a) shows an instance of a mixture of two chains $M_1, M_2$ where $M_1$ is disconnected. The transition probabilities  are uniform  among all outgoing vertices and we use random starting probabilities. Both the $\gkvsvd$ algorithm due to Gupta et al.~\cite{gupta2016mixtures} and our reconstruction algorithm  use samples of 3-trails. We assume that we know the number of chains $L=2$. Our algorithm outputs the groundtruth mixture as shown in  Figure~\ref{fig:cex}(a), while $\gkvsvd$ outputs the mixture as shown in  Figure~\ref{fig:cex}(b). 
Recovering the mixture with $\gkvsvd$
fails; as we observe, the  $\gkvsvd$  ``blends''  the groundtruth chains into two connected 
chains. The visualization shows two types of mistakes in $\gkvsvd$: some transition probabilities in the output are lower than the groundtruth (gray), or non-zero when they should be zero (red). 
In practice, assuming connectivity is often too restrictive. For instance, a user may browse thematically coherent pages on a given topic and then randomly start surfing pages with a different theme.


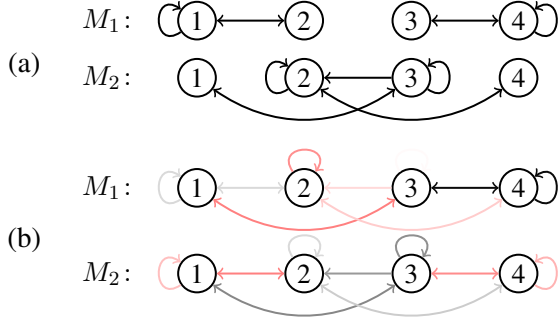
\begin{figure}[htbp]
\centering
\def\sndch{-0.8}%
\begin{tabular}{cm{7cm}}
(a) &
\scalebox{0.95}{\begin{tikzpicture}[state/.style={circle, draw=black, thick, minimum size=15pt, inner sep=0pt}]
    \node at (-1.2, 0) {$M_1\colon$};
    \node[state] (c1v1) at (0, 0) {1};
    \node[state] (c1v2) at (1.5, 0) {2};
    \node[state] (c1v3) at (3, 0) {3};
    \node[state] (c1v4) at (4.5, 0) {4};
    \draw[<->,thick] (c1v1) to (c1v2);
    \draw[<->,thick] (c1v3) to (c1v4);
    \draw[->,thick,looseness=4] (c1v4) to[out=-35,in=35] (c1v4);
    \draw[->,thick,looseness=4] (c1v1) to[out=215,in=145] (c1v1);
    
    \node at (-1.2, \sndch) {$M_2\colon$};
    \node[state] (c2v1) at (0, \sndch) {1};
    \node[state] (c2v2) at (1.5, \sndch) {2};
    \node[state] (c2v3) at (3, \sndch) {3};
    \node[state] (c2v4) at (4.5, \sndch) {4};
    \draw[<-,thick] (c2v2) to (c2v3);
    \draw[->,thick,looseness=4] (c2v3) to[out=-35,in=35] (c2v3);
    \draw[->,thick,looseness=4] (c2v2) to[out=215,in=145] (c2v2);
    \draw[<->,thick,looseness=1] (c2v1) to[out=-35,in=-145] (c2v3);
    \draw[<->,thick,looseness=1] (c2v2) to[out=-35,in=-145] (c2v4);
\end{tikzpicture}} \\[5mm]
(b) &
\def\sndch{-1.2}%
\scalebox{0.95}{\begin{tikzpicture}[state/.style={circle, draw=black, thick, minimum size=15pt, inner sep=0pt}]
    \node at (-1.2, 0) {$M_1\colon$};
    \node[state] (c1v1) at (0, 0) {1};
    \node[state] (c1v2) at (1.5, 0) {2};
    \node[state] (c1v3) at (3, 0) {3};
    \node[state] (c1v4) at (4.5, 0) {4};
    \draw[black!15,<->,thick] (c1v1) to (c1v2);
    \draw[black!96,<->,thick] (c1v3) to (c1v4);
    \draw[black!96,->,thick,looseness=4] (c1v4) to[out=-35,in=35] (c1v4);
    \draw[black!15,->,thick,looseness=4] (c1v1) to[out=215,in=145] (c1v1);
    \draw[red!44,->,thick,looseness=4] (c1v2) to[out=125,in=55] (c1v2);
    \draw[red!2,->,thick,looseness=4] (c1v3) to[out=125,in=55] (c1v3);
    \draw[red!50,<->,thick,looseness=1] (c1v1) to[out=-35,in=-145] (c1v3);
    \draw[red!20,<->,thick,looseness=1] (c1v2) to[out=-35,in=-145] (c1v4);
    \draw[red!15,<-,thick] (c1v2) to (c1v3);
    
    \node at (-1.2, \sndch) {$M_2\colon$};
    \node[state] (c1v1) at (0, \sndch) {1};
    \node[state] (c1v2) at (1.5, \sndch) {2};
    \node[state] (c1v3) at (3, \sndch) {3};
    \node[state] (c1v4) at (4.5, \sndch) {4};
    \draw[red!50,<->,thick] (c1v1) to (c1v2);
    \draw[red!45,<->,thick] (c1v3) to (c1v4);
    \draw[red!30,->,thick,looseness=4] (c1v4) to[out=-35,in=35] (c1v4);
    \draw[red!25,->,thick,looseness=4] (c1v1) to[out=215,in=145] (c1v1);
    \draw[black!15,->,thick,looseness=4] (c1v2) to[out=125,in=55] (c1v2);
    \draw[black!44,->,thick,looseness=4] (c1v3) to[out=125,in=55] (c1v3);
    \draw[black!47,<->,thick,looseness=1] (c1v1) to[out=-35,in=-145] (c1v3);
    \draw[black!20,<->,thick,looseness=1] (c1v2) to[out=-35,in=-145] (c1v4);
    \draw[black!40,<-,thick] (c1v2) to (c1v3);
\end{tikzpicture}} \\
\end{tabular}
\negfigsp
\caption{\label{fig:cex} Recovery of a Mixtures with Disconnections. 
(a) Ground-truth mixture. Our reconstruction algorithm perfectly reconstructs it from samples of 3-trails. 
(b) The output of $\gkvsvd$~\cite{gupta2016mixtures}. See the text for details.}
\end{figure}

\noindent A lesser issue of the $\gkvsvd$ algorithm is that it requires the choice of $L$. In practice, one can try the $\gkvsvd$ with different values of $L$ and find the smallest value that achieves near-zero reconstruction error (i.e., with respect to the input and learned 3-trail distributions), but one would then need to solve expensive matrix decompositions for each possible value of $L$. Can we find a good choice of $L$ more efficiently?   We prove bounds on the smallest non-zero singular value of certain matrices that can be used to guide the search for $L$ more efficiently in practice, avoiding the costly search.  
 
In summary, our key contributions include:

\begin{itemize}
    \item The method of Gupta et al.~\cite{gupta2016mixtures} cannot be used to learn mixtures that contain chains with disconnections, an important sub-class that occurs naturally. We extend their method to learn mixtures with disconnections.

    \item Oftentimes, little is known about the underlying mixture and it is thus unclear how to set $L$, the number of chains in the mixture. We show how to guess $L$ from the 3-trail distribution which is crucial for both SVD-based methods and iterative methods.  

    \item We verify our results experimentally and show that our method is well applicable under high levels of noise and on real-world data that is not exactly Markovian.
\end{itemize}

\spara{Outline} Section~\ref{sec:rel} introduces the notation and discusses the technical part of the GKV paper~\cite{gupta2016mixtures} that is central to our work. We also introduce some novel definitions in Section~\ref{sec:rel} that are crucial for our algorithmic improvements.  Then, Section~\ref{sec:proposed} presents our proposed techniques. The highlight is an improved reconstruction algorithm that   relaxes the implicit connectivity constraint of the GKV paper~\cite{gupta2016mixtures} and a well founded rule-of-thumb to select $L$, i.e., the total number of chains in the mixture. Section~\ref{sec:exp} presents our experimental results, both on synthetic and real data, and Section~\ref{sec:concl} concludes the paper with some directions for future work. For the convenience of the reader, we provide an example of the reconstruction in Appendix~\ref{sec:appendix2}.

\section{Theoretical preliminaries}
\label{sec:rel}

\begin{table}
  \caption{Symbols}
  \label{tab:sym}%
  \setlength{\tabcolsep}{5pt}%
  \centering
   \small{ 
  \begin{tabular}{rlp{10cm}}
    \toprule
    Matrix & &
    Definition \\
    \midrule
    $O_j$ & $n \times n$ & 3-trail distribution: $O_j(i, j) = p(i, j, k)$. \\
    $P_j$ and $Q_j$ & $L \times n$ & Probability of $i \to j$ and $j \to k$ in $M^\ell$:
    $P_j(\ell, i) = s^\ell_i \cdot M^\ell_{ij}$ and
    $Q_j(\ell, k) = s^\ell_j M^\ell_{jk}$.
    \\
    $S_j$ & $L \times L$ & Diagonal matrix with starting probabilities: $(S_j)_{\ell, \ell} = s^\ell_{j}$. \\
    \midrule
    $P'_j$ and $Q'_j$ & $L \times n$ & Guess at $P_j$ and $Q_j$ obtained by decomposing $O_j$. \\
    $X_j$ & $L \times L$ & Change of basis after decomposition. \\
    $Y_j$ and $Z_j$ & $L \times L$ & We defined $Y_j = X_j^{-1}$ and $Z_j = S_j \overline{X_j}$ such that $P_j = Y_j P'_j$
    and $Q_j = Z_j Q'_j$. \\
    \midrule
    $\A$ and $\A'$ & $2Ln\! \times\! n^2$ & Shuffle matrix $\A = \A(P_1, \dots, Q_1, \dots)$
    and $\A' = \A(P'_1, \dots, Q'_1, \dots)$. \\
    $D$ & $2Ln\! \times\! 2Ln$
    & Block diagonal matrix with diagonal blocks $Y_1, \dots, Y_n, Z_1, \dots, Z_n$ such
    that $\A = D \A'$. \\
    $Y'_j$ and $Z'_j$ & $r \times L$ & Rows of $(Y'_1, \dots, Z'_1, \dots)$ are a basis of $\ker(\A')$. \\
    \midrule
    $\xi_q$ & $2Ln$ & Indicator vector for the $q$-th conn. component $C \in \mathcal C^\ell$:
    $(\xi_q)^{\ell'}_x = 1 \Leftrightarrow x \in C \land \ell=\ell'$. \\
    $\Xi_j$ & $r \times L$ & Indicator matrix for state $j$:
    $\Xi_j(q, \ell') = 1$ $\Leftrightarrow j \in C \land \ell=\ell'$
    for the $q$-th connected component $C \in \mathcal C^\ell$. \\
    \midrule
    $R$ & $r \times r$ & Change of basis such that $\Xi_j Y_j = R Y'_j$ and $\Xi_j Z_j = R Z'_j$. \\
    $R_{\inv(j)}$ & $r \times L$ & $L$ columns of $R^{-1}$ corresponding to~conn. components
    containing $j$: $R_{\inv(j)}\! =\! R^{-1} \Xi_r$. \\
  \bottomrule
\end{tabular}
}
\end{table}

\spara{Notation}  We use some special notation throughout this paper
as we are dealing with numerous matrices: If needed, we write $A(i, j)$ to denote
the element $A_{ij}$ of a matrix $A$, in order to
avoid sub-indices. We also write $\overline{A}$
for the transpose of $A$ and $A^\dagger$ for the pseudoinverse.
Important matrices and their definitions are
conveniently found in Table~\ref{tab:sym}.

\subsection{GKV paper~\cite{gupta2016mixtures}}
\noindent \spara{The $\gkvsvd$ reconstruction algorithm} We now discuss in greater detail the method
of Gupta et al.~\cite{gupta2016mixtures} ($\gkvsvd$). A recovery algorithm is given access to the transition probabilities $p(i, j, k)$ for each 3-trail $(i, j, k) \in [n]^3$.
It will be more convenient to work with these probabilities in
matrix form, so we define $O_j \in \R^{n \times n}$ for
each intermediate state $j \in [n]$ as
\[
    O_j(i, k) \coloneqq p(i, j, k) .
\]
Following Gupta et al.~\cite{gupta2016mixtures},  we define the matrices $P_j, Q_j \in \R^{L \times n}$  for each state $j \in [n]$ as the probabilities
\[
    P_j(\ell, i) \coloneqq s^\ell_i \cdot M^\ell_{i, j}
    \quad\textrm{and}\quad
    Q_j(\ell, k) \coloneqq s^\ell_j \cdot M^\ell_{j, k}
\]
where $i, k \in [n]$ and $\ell \in [L]$. Note that the $i$-th column of $P_j$ and
the $j$-th column of $Q_i$  both contain the probability to start in state $i$  and transition to state $j$, in each chain of the  mixture, and are therefore identical.  As such, we call the sets $\{P_1, \dots, P_n\}$ and  $\{Q_1, \dots, Q_n\}$ \emph{shuffle pairs}.  Let also $S_j \in \R^{L \times L}$ be the diagonal matrix  containing the starting probabilities $(S_j)_{\ell \ell} \coloneqq s^\ell_j$ on its diagonal.
This enables Gupta et al.~\cite{gupta2016mixtures} to express $O_j$ compactly as 
\begin{align}
    \label{eq:6}
    O_j = \overline{P_j} \cdot S_j^{-1} \cdot Q_j.
\end{align}

\noindent  \spara{Singular Value Decomposition (SVD) as a recovery mechanism} Equation~\ref{eq:6} suggests that  we might be able to recover  $P_j$ and $Q_j$ by decomposing the
matrix $O_j$.
In particular, we
perform a singular value decomposition of $O_j$ into
$O_j = \overline{U_j} \cdot \Sigma_j V_j$
and define
\[
    P'_j \coloneqq U_j \in \R^{L \times n}
    \quad\textrm{and}\quad
    Q'_j \coloneqq \Sigma_j V_j .
\]
As shown by Gupta et al.~\cite{gupta2016mixtures},
$P'_j$ and $Q'_j$ are equal to $P_j$ and $Q_j$
up to a change of basis. That is, there exists a matrix
$X_j \in \R^{L \times L}$ of full rank such that 
$P_j = X_j^{-1} P'_j$ and $Q_j = S_j \overline{X_j} Q'_j$.
We also define
\[
    Y_j \coloneqq X_j^{-1}
    \quad\textrm{and}\quad
    Z_j \coloneqq S_j \overline{X_j}
\]
so the problem reduces to finding
$Y_j$ and $Z_j$.
We will further crucially use that
$\{P_1, \dots, P_n\}$ and $\{Q_1, \dots, Q_n\}$
are shuffle pairs, but it is not yet clear how
to use this combinatorial property.
In the next section, we will therefore show how
to express this property algebraically
through the co-kernel of a matrix $\A$.

\spara{The shuffle matrix $\A$}
We define the \emph{shuffle matrix}
as
$\A = \A(P_1, \dots, P_n, Q_1, \dots, Q_n)
\in \R^{2Ln \times n^2}$.
We index the rows of $\mathcal A$ with
triples from the set $[n] \times [L] \times \{ -, + \}$
where the symbols $-$ and $+$ stand for incoming and
outgoing edges, respectively.
We index columns with pairs from
the set $[n]^2$.
Using this convention, let
\begin{align*}
    \A_{(j,\ell,+), (i,j)} &\coloneqq P_j(\ell, i), \\
    \A_{(i,\ell,-),  (i,j)} &\coloneqq -Q_i(\ell, j),
\end{align*}
and all other entries be $0$.
The column $(i,j)$ of $\A$ therefore
contains the $i$-th column of $P_j$
and the $j$-th column of $-Q_i$.

We say
a vector $v \in \R^{2Ln}$ is in the
\emph{co-kernel} $\ker(\A)$
if $v^\top \A = \mathbf 0$.
We index the entries of $v$ like the
rows of $\A$, but write
for conciseness
$v^\ell_{j^+} \coloneqq v(j, \ell, +)$ and
$v_{j^+} \coloneqq (v_{j^+}^1, \dots, v_{j^+}^L)^\top \in \R^L$.
For the incoming counterparts,
we correspondingly write $v^\ell_{i^-}$ and $v_{i^-}$.
As promised,
we can now characterize the dependency
between matrices $P_j$ and $Q_i$
algebraically:
For a fixed chain $\ell \in [L]$,
we define a vector $v = w_\ell \in \R^{2Ln}$
and set $v^\ell_{j^+} = v^\ell_{i^-} \coloneqq 1$
for all $i, j \in [n]$
and all other entries to $0$.
Since
$(w_\ell^\top \A)_{(i,j)} = P_j(\ell, i) - Q_i(\ell, j)$,
we have that
the $i$-th column of $P_j$ is the
$j$-th column of $Q_i$ for all $i, j \in [n]$
if and only if
$w_\ell$ is in the co-kernel of $\A$, for all
$\ell \in [L]$.

\spara{The co-kernel of $\A'$}
To enforce the shuffle-pair property between the
matrices $P'_1, \dots, P'_n$ and $Q'_1, \dots, Q'_n$,
we naturally consider the co-kernel of the matrix
\[
    \A' \coloneqq \A(P'_1, \dots, P'_n, Q'_1, \dots, Q'_n).
\]
Recall that $P_j = Y_j P'_j$ and $Q_j = Z_j Q'_j$ for
all $j \in [n]$,
so we can directly relate $\A'$ to $\A$ via
the block diagonal matrix
\[\arraycolsep=0pt\def\arraystretch{.2}
    D \coloneqq \begin{pmatrix}
        Y_1 \vspace{-5pt} \\
        & \ddots \\
        & & Y_n \vspace{2pt} \\
        & & & Z_1 \vspace{-5pt} \\
        & & & & \ddots \\
        & & & & & Z_n \\
    \end{pmatrix}
    \in \R^{2Ln}
\]
through the equation $\A = D \A'$.
Note also that $D$ has full rank
since each of $Y_1, \dots, Y_n$
and $Z_1, \dots, Z_n$ has full rank.
It follows that
$v \in \ker(\A)$ if and only if
$\overline D v \in \ker(\A')$,
so we have full knowledge of the
co-kernel of $\A'$ given
the co-kernel of $\A$. Gupta et al.~\cite{gupta2016mixtures} refer to the mixture's property that suffices for perfect recovery as {\em well-distributed}, and it can be shown using standard linear algebra that no mixture can satisfy this condition unless $n\geq 2L$. We also inherit the latter condition.

\paragraph{Restrictions}

The original method by Gupta et al. \cite{gupta2016mixtures}
requires that each dimension of the co-kernel of $\A$ corresponds
to a single chain.
This restricts the mixture to be fully connected in each chain
as we show empirically in Figure~\ref{fig:cex}.
As such, their method is not applicable in many
natural scenarios where a chain might be disconnected.
For instance, a user might browse two sections of a website
separately as a simplified interface (e.g., on a smartphone)
might not allow a direct jump between both.
By relating the dimensions of the co-kernel of
$\A$ to connected components in the chains of the mixture,
we are able to alleviate exactly this connectivity
constraint.


\subsection{New Definitions and Comparison to $\gkvsvd$~\cite{gupta2016mixtures}}

To understand the connectivity of $M^\ell$ for
each $\ell \in [L]$,
we define the undirected bipartite graph $G^\ell$
that contains two copies $j^-$ and $j^+$ for every
state $j \in [n]$, and an edge from $i^+$
to $j^-$ if there is positive transition
probability from state $i$ to $j$, for
all $i, j \in [n]$.
More formally, let $V(G^\ell) = V^\pm$ be this
duplicated vertex set  and
\[
    E(G^\ell) \coloneqq \{ \{i^+, j^-\} \mid M^\ell_{ij} > 0 \} .
\]
We call a mixture \emph{companion-connected}
if each state $j \in [n]$ has a 
companion $i \in [n]$
such that $j^-$ and $i^-$ are reachable from $j^+$
in $G^\ell$ for all $\ell \in [L]$. We shall refer to this reachability property 
as $j^+$ being connected to $j^-$ and $i^-$. For example, consider the Markov chain (in the simple case $L=1$) defined by a random walk on a connected, undirected graph. Then, this chain is companion-connected, i.e., the reachability property holds, if and only if the graph is not bipartite; when the graph is bipartite, we will obtain two connected components in $G^1$ where the two copies of $j$ ($j^+$ and $j^-$) are in two different connected components.




We define $\mathcal C^\ell$ as the set of connected components
of $G^\ell$, for $\ell=1,\ldots,L$.
We need to connect the combinatorial
connectivity properties of the graph $G^\ell$ to the
algebraic properties of $\A$, which
we facilitate through
indicator vectors $v = \xi^\ell_C \in \R^{2Ln}$
for each 
$C \in \mathcal C^\ell$.
We set $v^\ell_x = 1$ if and only if $x \in C$ and
let entries corresponding to other chains be
all zero (i.e. $v^{\ell'}_x = 0$ for all
$\ell' \not= \ell$ and $x \in V^\pm$).

Let now $r \coloneqq \sum_\ell |\mathcal C^\ell|$ be the total
number of connected components in the mixture.
We identify each connected component
$C \in \mathcal C^\ell$, from each
chain $\ell \in [L]$, with an index $q \in [r]$. 
As such, we also write $\xi_q = \xi_C^\ell$ if
$C \in \mathcal C^\ell$ is the overall
$q$-th connected component in
the mixture.

Furthermore, if for each state $j$, the
vertices $j^+$ and $j^-$ are always
connected (e.g. if the mixture
is companion-connected), we may as well talk
about connectivity of states.
As such, we define matrices
$\Xi_j \in \R^{r \times L}$ for each state $j$
indicating whether vertices $j^+$ and $j^-$
belong to a connected
component $C \in \mathcal C^\ell$.
That is, for the $q$-th connected
component $C \in \mathcal C^\ell$, we let
$\Xi_j(q, \ell') = 1$ if $j \in C$
and $\ell = \ell'$, 
and $0$ otherwise.
Observe the duality
\begin{align}
    \label{eq:1}
    \begin{pmatrix}
        \xi^\top_1 \\
        \vdots \\
        \xi^\top_r
    \end{pmatrix} =
    \left(
        \Xi_1, \dots, \Xi_n, \Xi_1, \dots, \Xi_n
    \right)
\end{align}
as
$j^+$ and $j^-$ are always in the same connected
component and thus $\xi_{j^+} = \xi_{j^-}$
for all indicator vectors
$\xi \in \{\xi_1, \dots, \xi_r\}$. For the reader's convenience, we provide an example in Appendix~\ref{sec:appendix2} that illustrates the above concepts and definitions. 

\spara{What's new in our work}    Our method introduces non-trivial ideas and technical improvements that relax  the strict assumptions of \cite{gupta2016mixtures} about the co-kernel which force
the ground-truth mixture to be fully connected in every chain. This requires to prove further properties of the co-kernel of $\A'$ that allow us to identify which states are companions.  A further difference is that we have to carry out part of the reconstruction for each group of companions, and each group only reveals partial information. As such, we also show how to merge the individual reconstruction results and recover the mixture.

\section{Proposed method $\casvd$}
\label{sec:proposed}

Before stating our main theorem, we want
to motivate the connection between the co-kernel
of $\A$ and the connected components of the mixture.
With the previously introduced notation, we can write
\begin{align*}
    \| v^\top \A \|_2^2
    &= \sum_{i, j} \left( \sum_{\ell=1}^L
        \left( v_{j^+}^\ell P_j(\ell, i) - v_{i^-}^\ell Q_i(\ell, j) \right) \right)^2 = \sum_{i, j} \left( (v_{j^+} - v_{i^-})^\top S_i M_{ij} \right)^2
\end{align*}
where $(S_i M_{ij})_\ell = s^\ell_i \cdot M^\ell_{ij}$ is the
probability of starting in state $i$ of the $\ell$-th chain
and transitioning to state~$j$.
%
Given a vector $v \in \R^{2Ln}$ whose only
non-zero entries correspond to the $\ell$-th chain (i.e.
$v^{\ell'}_{j^-} = v^{\ell'}_{i^+} = 0$ for $\ell' \not= \ell$
and all $i, j \in [n]$),
the introduction of $G^\ell$ allows
us to write this norm
as the weighted Laplacian quadratic form
\[
    \| v^\top \A \|_2^2
    = \sum_{\{i^+, j^-\} \in E(G^\ell)}
        (s^\ell_i \cdot M^\ell_{ij})^2 \left(v^\ell_{j^-} - v^\ell_{i^+}\right)^2 .
\]
It follows from a basic fact of the Laplacian
form that there is one dimension
in the co-kernel of $\A$
for each connected component $C \subseteq V^\pm$.
In particular, this dimension is
spanned by the indicator vector $\xi^\ell_C \in \R^{2Ln}$.
It turns out that the relationship
between connected components and dimensions
in the co-kernel is crucial for the
recovery:

\begin{theorem}
    \label{thm:main}
    If the mixture is companion-connected,
    the co-kernel of $\mathcal A$ is
    spanned by indicator vectors
    $\xi_1, \dots, \xi_r$,
    and the ratios of starting probabilities
    $\{ s_i^\ell / s^\ell_j : \ell \in [L] \}$
    are distinct\footnote{This seems to be a necessary
    condition for $\gkvsvd$
    as well, so we are not more restrictive here.}
    for each pair $i, j \in [n]$,
    then we can reconstruct the mixture $\mathcal M$
    given its 3-trail distribution.
\end{theorem}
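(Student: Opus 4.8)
The plan is to turn the co-kernel relation into an explicit recipe for the change-of-basis and then peel off the combinatorial component structure. First I would run the obvious pipeline: form each $O_j$ from the $3$-trail distribution, take the SVD to obtain $P'_j,Q'_j$, build $\A'=\A(P'_1,\dots,Q'_1,\dots)$ and compute its co-kernel. Since $\A=D\A'$ with $D$ of full rank, $\ker(\A')=\overline D\,\ker(\A)$, and by hypothesis $\ker(\A)=\spn\{\xi_1,\dots,\xi_r\}$. Writing each transformed basis vector $\overline D\xi_q$ in block form and using the duality \eqref{eq:1} (so that the $j^+$ and $j^-$ blocks of $\xi_q^\top$ are both the $q$-th row of $\Xi_j$), the stacked basis equals $(\Xi_1Y_1,\dots,\Xi_nY_n,\Xi_1Z_1,\dots,\Xi_nZ_n)$. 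Comparing with any computed basis $(Y'_1,\dots,Z'_n)$ of $\ker(\A')$ yields an invertible $R\in\R^{r\times r}$ with $\Xi_jY_j=RY'_j$ and $\Xi_jZ_j=RZ'_j$ for every $j$. Because companion-connectedness forces $j^+$ and $j^-$ into one component per chain, each $\Xi_j$ has orthonormal columns ($\overline{\Xi_j}\Xi_j=I_L$), so $Y_j=\overline{\Xi_j}RY'_j$ and $Z_j=\overline{\Xi_j}RZ'_j$; thus recovering $P_j=Y_jP'_j$ and $Q_j=Z_jQ'_j$ reduces to determining $R$ together with the combinatorial data $\Xi_j$.

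The engine for determining $R$ is the identity $Z_j\overline{Y_j}=S_j$ — immediate from $Y_j=X_j^{-1}$ and $Z_j=S_j\overline{X_j}$ — which asserts that this product is the diagonal matrix of starting probabilities $s^\ell_j$. Substituting the expressions above turns it into $\overline{\Xi_j}\,(R\,W_j\,\overline R)\,\Xi_j=S_j$ with the known matrix $W_j:=Z'_j\overline{Y'_j}$; equivalently, the submatrix of $R W_j\overline R$ indexed by the components meeting $j$ is diagonal with entries $s^\ell_j$. In the fully connected regime $\Xi_j\equiv I_L$ (so $r=L$) this is exactly the GKV situation, $R W_j\overline R=S_j$ for all $j$, whence for any two states $R\,(W_jW_{j'}^{-1})\,R^{-1}=S_jS_{j'}^{-1}=\mathrm{diag}(s^\ell_j/s^\ell_{j'})$, so $R$ is the eigenvector matrix of a known operator and the distinct-ratios hypothesis guarantees simple eigenvalues, hence uniqueness up to permutation and diagonal scaling. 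For a mixture with disconnections I would localize the same computation: reading the diagonal and off-diagonal relations within one component — anchored by the companion, which supplies the second state required to form the ratio — again produces, under distinct ratios, the corresponding rows of $R$ up to scale.

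It then remains to kill the two residual ambiguities and assemble the global answer. The permutation is merely a relabeling of the $L$ chains and is harmless; the diagonal rescaling is fixed by the probability constraints, since $\sum_k Q_j(\ell,k)=s^\ell_j$ and $\sum_k M^\ell_{jk}=1$ force the correct normalization of each recovered row, while $\sum_jP_j(\ell,i)=s^\ell_i$ recovers the starting vector. Once $P_j,Q_j$ are in hand for all $j$, reading off $M^\ell_{jk}=Q_j(\ell,k)/s^\ell_j$ and $s^\ell_i=\sum_jP_j(\ell,i)$ reconstructs $\M$.

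I expect the genuinely new difficulty — absent from $\gkvsvd$, where a single global $R$ and $\Xi_j\equiv I_L$ make everything uniform — to be the combinatorial layer. Two things must be handled: first, reading the component/companion structure (the $\Xi_j$'s, i.e.\ which states share a component in each chain) directly off the numerically computed co-kernel of $\A'$; and second, merging the per-component local reconstructions, each determined only up to its own permutation and scaling, into one globally consistent mixture — a step made delicate precisely by the $\overline{\Xi_j}$-projections that obstruct the clean global diagonalization. Companion-connectedness is the lever: it guarantees every component is anchored by a companion state (so the eigen-step has its second state) and that components overlap enough through shared states to propagate a single consistent chain labeling and scaling across the whole mixture. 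Verifying that these overlaps always suffice — that the local solutions glue uniquely — is the step I would expect to demand the most care.
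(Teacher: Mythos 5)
Your outline correctly reproduces the paper's skeleton---the co-kernel relation $\Xi_j Y_j = R Y'_j$, the reduction to finding $R$, and an eigendecomposition whose simple eigenvalues come from the distinct-ratios hypothesis---and in the fully connected regime ($r = L$, $\Xi_j = I_L$) your argument is essentially complete (and coincides with $\gkvsvd$). But the two steps you defer are exactly the new content of the theorem, and neither goes through as sketched. First, companionship detection: your master identity $\overline{\Xi_j}\,(R W_j \overline{R})\,\Xi_j = S_j$, with $W_j = Z'_j \overline{Y'_j}$, involves \emph{both} unknowns $R$ and $\Xi_j$ at once, so nothing about the component structure can be ``read directly off'' the computed co-kernel from it; one needs a test expressed purely in the observable matrices $W_j$. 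This is what Lemma~\ref{lem:comp} provides ($i$ and $j$ are companions iff $(Z'_j \overline{Y'_j})^\dagger (Z'_i \overline{Y'_i})$ has rank $L$ and is the pseudoinverse of $(Z'_i \overline{Y'_i})^\dagger (Z'_j \overline{Y'_j})$), and its proof rests on Lemma~\ref{lem:inv}, whose ``only if'' direction requires a genuine orthogonal-projection argument---it is not a formal manipulation. Second, your plan to ``localize the same computation'' fails as stated: when $r > L$ the matrix $W_j \in \R^{r \times r}$ has rank only $L$, so $W_{j'}^{-1}$ does not exist and the conjugation $R (W_j W_{j'}^{-1}) R^{-1}$ is meaningless. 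The paper's replacement is the pseudoinverse product in \eqref{eq:7}, and its simplification to the diagonalizable form $\overline{R^\dagger_{\inv(j)}} S_j^{-1} S_i \overline{R_{\inv(j)}}$ in \eqref{eq:8} holds \emph{only when} $i$ and $j$ are companions, since only then $R^\dagger_{\inv(j)} R_{\inv(i)} = I_L$. So the eigen-step presupposes the companionship test; the two cannot be decoupled the way your sketch suggests.

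Third, the gluing you yourself flag as ``the step demanding the most care'' is left entirely undone, and it is not automatic: each equivalence class reveals only the $L$ columns $R_{\inv(j)} = R^{-1}\Xi_j$ of $R^{-1}$, each up to its own permutation and scaling. The paper fixes the scaling via $O_j \mathbf 1_n = \overline{P_j} \mathbf 1_L$ applied to $P_j = \Pi_j D_j \tilde R'_j Y'_j P'_j$ (solvable for $D_j$ because $\tilde R'_j Y'_j P'_j$ has full row rank), then concatenates columns across classes, inverts to obtain $\Pi R$, computes $\Pi R Y'_j = \Pi \Xi_j Y_j$, and extracts a consistent chain-labeling $a\colon [r] \to [L]$ from the pattern of nonzero rows so that all states' matrices $Y_j, Z_j$ carry the \emph{same} permutation $\Pi'$. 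Your probability-constraint normalization is a plausible alternative for fixing scales, but the existence and consistency of the global labeling is precisely what needs proof and is absent. As it stands, your proposal establishes the theorem only in the case $r = L$ already covered by Gupta et al., and identifies---without closing---the gaps that the paper's Lemmas~\ref{lem:comp} and~\ref{lem:inv} and its merging procedure are there to close.
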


The conditions in Theorem~\ref{thm:main} are less restrictive than in
\cite{gupta2016mixtures}. Empirically all the real-world mixtures  we experimented with are  companion-connected, and lazy random walks on undirected graphs are provably 
companion-connected, as we explained before. 
%
%
We provide the pseudocode for our algorithm in Algorithm~\ref{alg:main} but its description is clear in the proof as it is constructive. Before delving into full details, we provide a proof outline of the reconstruction.

\begin{enumerate}
\item \emph{Co-Kernel}: As we know, $D$ maps between the         co-kernels of the two matrices $\A$ and $\A'$.  By assumption, vectors $\xi_1, \dots, \xi_r$ span the co-kernel of $\A$, so $\overline D \xi_1, \dots, \overline D \xi_r$ span the co-kernel of $\A'$. By Equation \eqref{eq:1} and definition of $D$, this basis can be written equivalently as the rows of
        the matrix $ \left(
                \Xi_1 Y_1, \dots, \Xi_n Y_n, \Xi_1 Z_1, \dots, \Xi_n Z_n
            \right)$.
%
We compute an arbitrary basis of the latter which is correct up to multiplication with a full-rank matrix $R \in \R^{r \times r}$. In particular,  $\Xi_j Y_j = R Y'_j$ for all states $j \in [n]$. 

\item \emph{Component Structure}:
        It turns out that we can efficiently
        determine whether two states $i$ and $j$
        are companions by looking at products of the form
        $(Z'_j \overline{Y'_j})^\dagger (Z'_i \overline{Y'_i})$
        and their pseudoinverses.
        We check companionship for all pairs
        of states $i, j \in [n]$, form equivalence
        classes of companionship, and arbitrarily pick a
        representative state $j$
        along with a companion $i$ from each class.

\item \emph{Eigendecomposition}:
        For each representative $j$ and companion $i$, we
        have $\Xi_i = \Xi_j$ and thus obtain  
        \[
            (Z'_j \overline{Y'_j})^\dagger (Z'_i \overline{Y'_i})
            = \overline{R^\dagger_{\inv(j)}} S_j^{-1} S_i \overline{R_{\inv(i)}}
        \]
        where $R_{\inv(j)} \coloneqq R^{-1} \Xi_j$ are the 
        columns of $R^{-1}$ that correspond to connected
        components containing $j$.
        We perform an eigendecomposition of the
        above matrix which reveals $R_{\inv(j)}$ 
        up to a scaling and ordering of the eigenvectors.
        We use a fact from Gupta et al. \cite{gupta2016mixtures}
        to undo the scaling which gives
        us a matrix $\tilde R_j = \Pi^{-1}_j R^\dagger_{\inv(j)}$
        for some arbitrary rotation matrix $\Pi_j$.
        
\item   \emph{Merging Components}:
        Each equivalence class reveals some
        columns of $R^{-1}$, so overall
        we know all columns of $R$, just not their
        order. We combine the columns and take the inverse
        which yields $\Pi R$ for another
        permutation matrix $\Pi$.
        Note that $\Xi_j Y_j$
        is just $Y_j$ but interspersed with
        all-zero rows.
        With some care we can undo the effect
        of $\Xi_j$ uniformly for all states,
        and then easily reconstruct the whole
        mixture from there.

\end{enumerate}

%
\begin{algorithm}
\label{alg:main}
\newcommand*\ruleline[1]{\noindent\raisebox{.8ex}{\makebox[\linewidth]{\hrulefill\hspace{1ex}\raisebox{-.8ex}{#1}}}}
\newcommand\mycommfont[1]{\it\small#1}
\SetCommentSty{mycommfont}
\SetKwComment{Comment}{}{}
\DontPrintSemicolon
\caption{$\casvd$}\label{alg:two}
\KwData{3-trail distribution $p$}
\KwResult{Mixture $\mathcal M$}
\For(\Comment*[h]{\hfill SVD}){$j \in [n]$}{
    Let $O_j(i, k) = p(i, j, k)$ for all $i, k \in [n]$\;
    Decompose $O_j$ into $\overline{U_j} \Sigma_j V_j$ via SVD\;
    $P'_j \gets U_j$\;
    $Q'_j \gets \Sigma_j V_j$\;
}
$\A' \gets \A(P'_1, \dots, P'_n, Q'_1, \dots, Q'_n)$ \Comment*{Co-Kernel}
Let $(Y'_1, \dots, Y'_n, Z'_1, \dots, Z'_n)$ be any basis for $\ker(\A')$\;
$S \gets [n], J \gets \emptyset$ \Comment*{Component Structure}
\While{$S \supsetneq \emptyset$}{
    Remove an arbitrary element $j$ from $S$ and add $j$ to $J$\;
    \For{$i \in S$}{
        \If{$\big( (Z'_i \overline{Y'_i})^\dagger (Z'_j \overline{Y'_j}) \big)^\dagger = (Z'_j \overline{Y'_j})^\dagger (Z'_i \overline{Y'_i})$}{
            Mark $i$ as a companion of $j$ and remove $i$ from $S$
        }
    }
}
\For(\Comment*[h]{\hfill Eigendecomposition}){$j \in J$}{
    Let $i$ be any companion of $j$\;
    Decompose $(Z'_j \overline{Y'_j})^\dagger (Z'_i \overline{Y'_i})$ into $(\tilde R'_j)^{-1} \Lambda \tilde R'_j$ via an
    eigendecomposition\;
    $d_j \gets \overline{O_j \mathbf 1_n} \cdot \big( \tilde R'_j Y'_j P'_j \big)^\dagger \in \R^L$\;
    $\tilde R_j \gets \mathrm{diag}(d_j) \cdot \tilde R'_j$\;
}
Let $\rho_1, \dots, \rho_r$ be distinct columns from $\{ \tilde R_j : j \in J \}$\;
$R \gets \left(\rho_1\ \rho_2\ \dots\ \rho_r\right)^{-1}$ \Comment*{Merging Components}
\For{$j \in J$}{
    Let $C(j) \subseteq [r]$ be the indices of non-zero rows in $R Y'_k$\;
}
Find an assignment $a\colon [r] \to [L]$ such that $\{ a(q) : q \in C(j) \} = [L]$ for all $j \in J$\;
\For{$j \in [n]$}{
    Initialize empty matrices $Y_j, Z_j \in \R^{L \times L}$ with
    rows $Y_j(1), \dots, Y_j(L)$ and $Z_j(1), \dots, Y_j(L)$\;
    \For{$q \in [r]$}{
        \If{$(R Y'_k)(q) \not= 0$}{
            $Y_j(a(q)) \gets (R Y'_k)(q)$\;
            $Z_j(a(q)) \gets (R Z'_k)(q)$\;
        }
    }
    $S_j \gets Z_j Y_j$\;
    $P_j \gets Y_j P'_j$\;
    \For{$\ell \in [L], i \in [n]$}{
        $M^\ell_{ij} \gets P_j(\ell, i) / S_j(\ell, \ell)$
    }
}
\end{algorithm}

\subsection{Co-Kernel}


By assumption, the co-kernel of $\A$ is spanned by
indicator vectors $\xi_1, \dots, \xi_r$
which implies that the co-kernel of
$\A'$ is spanned by
$\overline D \xi_1, \dots \overline D \xi_r$.
Using Equation \eqref{eq:1}, we can
succinctly represent this basis
of the co-kernel of $\A'$
as the rows of the matrix
\begin{align}
    \label{eq:2}
    \left(
        \Xi_1, \dots, \Xi_n, \Xi_1, \dots, \Xi_n
    \right) \cdot D
    =
    \left(
        \Xi_1 Y_1, \dots, \Xi_n Y_n, \Xi_1 Z_1, \dots, \Xi_n Z_n
    \right)
    \in \R^{r \times 2nL} .
\end{align}
We take a guess at this by computing
arbitrary basis of the co-kernel of $\A'$,
given as the rows of the matrix
\begin{align}
    \label{eq:3}
    \left(
        Y'_1, \dots, Y'_n, Z'_1, \dots, Z'_n
    \right)
    \in \R^{r \times 2nL}.
\end{align}
Since the rows of matrices in
(\ref{eq:2}) and (\ref{eq:3})
are both bases for the co-kernel,
there necessarily
exists a matrix
$R \in \R^{r \times r}$ of full rank such that
$\Xi_j Y_j = R Y'_j$ and
$\Xi_j Z_j = R Z'_j$.
By using the shuffle pair property in the
form of the co-kernel of $\A$ and relating
this co-kernel to $\A'$, we have
reduced the problem to finding $R$.
In the next section, we show how to
explore the component structure of the
mixture to recover parts of $R$.

\subsection{Component Structure}

For each $j \in [n]$, let us define $ R_{\inv(j)} \coloneqq R^{-1} \Xi_j \in \R^{r \times L}$ 
as the inverse of $R$, restricted to columns
corresponding to connected components containing
$j$.
Note that $R_{\inv(j)}$ has full column rank since
$\Xi_j$ also has full column rank.
Matrices $Z_j$ and $\overline{Y_j}$ are inverse
up to $S_j$, so we can also try and
multiply
\[
    Z'_j \cdot \overline{Y'_j}
    = R^{-1} \Xi_j Z_j \cdot
    \overline{Y_j} \, \overline{R^{-1} \Xi_j}
    = R_{\inv(j)} S_j \overline{R_{\inv(j)}}.
\]
Since $R_{\inv(j)}$ has full column rank
and $S_j$ is a square matrix of full rank, we can
compute the pseudoinverse of the product
as\footnote{$(A B)^\dagger = B^\dagger A^\dagger$ if $A$ has full
 column rank and $B$ full row rank}  $   \left( Z'_j \cdot \overline{Y'_j} \right)^\dagger =\overline{R^\dagger_{\inv(j)}} S_j^{-1} R^\dagger_{\inv(j)}$
and further for another state $i \in [n]$,
\begin{align}
    \label{eq:7}
    \left( Z'_j \overline{Y'_j} \right)^\dagger \cdot
    \left( Z'_i \overline{Y'_i} \right)
    &= \overline{R^\dagger_{\inv(j)}} S_j^{-1} R^\dagger_{\inv(j)}
    R_{\inv(i)} S_i  \overline{R_{\inv(i)}} .
\end{align}
Assume for now that $i$ is a companion of $j$
such that $\Xi_i$ and $\Xi_j$ are identical.
Then,
$R_{\inv(j)} = R_{\inv(i)}$
and
$R^\dagger_{\inv(j)} R_{\inv(i)} = I_L$
since $R_{\inv(i)}$ has full
column rank.
This is helpful as (\ref{eq:7}) immediately simplifies to
\begin{align}
    \label{eq:8}
    \left( Z'_j \overline{Y'_j} \right)^\dagger \cdot
    \left( Z'_i \overline{Y'_i} \right)
    = \overline{R^\dagger_{\inv(j)}} S_j^{-1} S_i \overline{R_{\inv(i)}}
\end{align}
and we can (almost) obtain $R_{\inv(i)}$ via an
eigendecomposition.
This will be the focus of the following section.
However, it yet remains to detect whether $i$ is
a companion of $j$.
If $i$ and $j$
were companions, we could
write the pseudoinverse of 
$(Z'_j \overline{Y'_j})^\dagger (Z'_i \overline{Y'_i})$
as
\[
    \left( (Z'_j \overline{Y'_j})^\dagger (Z'_i \overline{Y'_i})\right)^\dagger
    = \overline{R^\dagger_{\inv(i)}} S_i^{-1} S_j \overline{R_{\inv(j)}}
\]
which is just 
$(Z'_i \overline{Y'_i})^\dagger (Z'_j \overline{Y'_j})$, as apparent
from (\ref{eq:8}).
It turns out that this property
is also necessary for companionship:

\begin{lemma}
\label{lem:comp}
The states $i$ and $j$ are companions if and only if
$(Z'_j \overline{Y'_j})^\dagger (Z'_i \overline{Y'_i})$
has rank $L$ and
is the pseudoinverse of
$(Z'_i \overline{Y'_i})^\dagger (Z'_j \overline{Y'_j})$.
\end{lemma}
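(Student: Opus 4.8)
The plan is to run both implications through the explicit factorizations already in hand. Write $\Gamma_j \coloneqq R_{\inv(j)}$ and $\Gamma_i \coloneqq R_{\inv(i)}$; both lie in $\R^{r \times L}$ with full column rank, and recall $Z'_j \overline{Y'_j} = \Gamma_j S_j \overline{\Gamma_j}$ together with $(Z'_j \overline{Y'_j})^\dagger = \overline{\Gamma_j^\dagger} S_j^{-1} \Gamma_j^\dagger$ (and symmetrically for $i$). The crucial preliminary is a dictionary between the combinatorial and the algebraic notions of companionship. Since the mixture is companion-connected, each column of $\Xi_j$ carries a single $1$, and since components of distinct chains receive distinct indices in $[r]$, the matrix $\Gamma_j = R^{-1}\Xi_j$ simply selects $L$ distinct columns of the invertible matrix $R^{-1}$. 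Consequently $i$ and $j$ are companions iff $\Xi_i = \Xi_j$ iff $\Gamma_i = \Gamma_j$; and because the columns of $R^{-1}$ form a basis of $\R^r$, equal spans of two basis-subsets force equal subsets, so $\Gamma_i = \Gamma_j$ is in fact equivalent to the a priori weaker statement $\mathrm{col}(\Gamma_i) = \mathrm{col}(\Gamma_j)$. I would therefore reduce the whole lemma to deciding this column-space equality.

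For the forward direction, companionship gives $\Gamma_i = \Gamma_j \eqqcolon \Gamma$, and the product collapses to $(Z'_j \overline{Y'_j})^\dagger (Z'_i \overline{Y'_i}) = \overline{\Gamma^\dagger} S_j^{-1} S_i \overline{\Gamma}$ as in \eqref{eq:8}; this has rank $L$ since its outer factors $\overline{\Gamma^\dagger}$ and $\overline{\Gamma}$ have rank $L$ and $S_j^{-1}S_i$ is invertible. That its pseudoinverse is $\overline{\Gamma^\dagger} S_i^{-1} S_j \overline{\Gamma} = (Z'_i \overline{Y'_i})^\dagger (Z'_j \overline{Y'_j})$ is then checked by verifying the four Moore--Penrose identities directly, each reducing to the single fact $\overline{\Gamma}\,\overline{\Gamma^\dagger} = \Gamma^\dagger \Gamma = I_L$ (which makes both mixed products equal to the symmetric idempotent $\overline{\Gamma \Gamma^\dagger}$).

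The converse is where the work lies. Assume the product has rank $L$ and equals the pseudoinverse of $(Z'_i \overline{Y'_i})^\dagger (Z'_j \overline{Y'_j})$. Set $W \coloneqq \Gamma_j^\dagger \Gamma_i$ and write $T \coloneqq (Z'_j \overline{Y'_j})^\dagger (Z'_i \overline{Y'_i}) = \overline{\Gamma_j^\dagger} S_j^{-1} W S_i \overline{\Gamma_i}$, following \eqref{eq:7}. Because $\overline{\Gamma_j^\dagger}$ has full column rank, $\overline{\Gamma_i}$ full row rank, and $S_j^{-1}, S_i$ are invertible, $\mathrm{rank}(T) = \mathrm{rank}(W)$, so the rank-$L$ hypothesis is exactly the invertibility of $W$. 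Then $T = FG$ with $F \coloneqq \overline{\Gamma_j^\dagger} S_j^{-1} W S_i$ of full column rank and $G \coloneqq \overline{\Gamma_i}$ of full row rank, so the reverse-order law for pseudoinverses applies and gives $T^\dagger = G^\dagger F^\dagger = \overline{\Gamma_i^\dagger} S_i^{-1} W^{-1} S_j \overline{\Gamma_j}$. On the other hand $(Z'_i \overline{Y'_i})^\dagger (Z'_j \overline{Y'_j}) = \overline{\Gamma_i^\dagger} S_i^{-1} (\Gamma_i^\dagger \Gamma_j) S_j \overline{\Gamma_j}$, and the assumed identity $T^\dagger = (Z'_i \overline{Y'_i})^\dagger (Z'_j \overline{Y'_j})$, after cancelling the full-rank outer factors $\overline{\Gamma_i^\dagger}$, $\overline{\Gamma_j}$ and the invertible $S_i, S_j$, forces $W^{-1} = \Gamma_i^\dagger \Gamma_j$, that is $\Gamma_j^\dagger \Gamma_i \Gamma_i^\dagger \Gamma_j = I_L$.

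It remains to convert this identity into companionship. Since $\Gamma_i \Gamma_i^\dagger$ is the orthogonal projection onto $\mathrm{col}(\Gamma_i)$ and $\Gamma_j^\dagger = (\overline{\Gamma_j}\,\Gamma_j)^{-1}\overline{\Gamma_j}$, the identity rearranges to $\overline{\Gamma_j}(I - \Gamma_i \Gamma_i^\dagger)\Gamma_j = 0$; as $I - \Gamma_i \Gamma_i^\dagger$ is a symmetric idempotent, this is the vanishing of the Gram matrix of $(I - \Gamma_i \Gamma_i^\dagger)\Gamma_j$, whence $(I - \Gamma_i \Gamma_i^\dagger)\Gamma_j = 0$ and $\mathrm{col}(\Gamma_j) \subseteq \mathrm{col}(\Gamma_i)$. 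Equal dimension $L$ upgrades this to $\mathrm{col}(\Gamma_j) = \mathrm{col}(\Gamma_i)$, which by the basis dictionary of the first paragraph gives $\Gamma_i = \Gamma_j$ and hence companionship. I expect the reverse-order-law step to be the main obstacle: $T^\dagger = G^\dagger F^\dagger$ is false for generic factorizations, so the argument hinges on verifying the full-rank conditions on $F$ and $G$ --- precisely the point where the rank-$L$ hypothesis is indispensable --- and likewise the outer-factor cancellation and the Gram-matrix step must be justified through these rank properties rather than naive matrix manipulation.
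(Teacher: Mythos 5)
Your proof is correct, and its main skeleton matches the paper's: you identify the rank-$L$ hypothesis with invertibility of the middle factor $R^\dagger_{\inv(j)} R_{\inv(i)}$ in \eqref{eq:7}, invoke the reverse-order law $(AB)^\dagger = B^\dagger A^\dagger$ for a full-column-rank times full-row-rank factorization, and cancel the full-rank outer factors to reduce everything to the identity $R^\dagger_{\inv(j)} R_{\inv(i)} R^\dagger_{\inv(i)} R_{\inv(j)} = I_L$. The genuine difference is how that identity is converted into companionship. The paper isolates this step as Lemma~\ref{lem:inv} and proves it by contradiction: assuming $\Xi_i \neq \Xi_j$, it picks $x \in \im(R_{\inv(j)}) \setminus \im(R_{\inv(i)})$ and runs a geometric argument with the projections $\proj_j$, $\proj_i$ and the residual direction $d = \proj_i x - x$ to force $d = 0$. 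You instead argue directly: writing $\proj_i = R_{\inv(i)} R^\dagger_{\inv(i)}$, the identity rearranges to the vanishing Gram matrix $\overline{R_{\inv(j)}} (I - \proj_i) R_{\inv(j)} = 0$, hence $(I - \proj_i) R_{\inv(j)} = 0$, hence $\im(R_{\inv(j)}) \subseteq \im(R_{\inv(i)})$, and a dimension count upgrades this to equality. You then add the combinatorial step that equal column spaces force $\Xi_i = \Xi_j$: since the columns of $R^{-1}$ form a basis and components of distinct chains carry distinct indices, two $L$-subsets of columns with equal span must coincide. This makes explicit a point the paper's Lemma~\ref{lem:inv} states somewhat loosely (its ``more columns than rows'' should read ``more rows than columns''). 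The net effect is that your route is shorter and contradiction-free, resting only on $M^\top M = 0 \Rightarrow M = 0$, whereas the paper's projection argument packages the same content as a reusable standalone lemma that it invokes in both directions. Your verification of the four Moore--Penrose identities in the forward direction is likewise a sound, if slightly more laborious, substitute for the paper's one-line appeal to the reverse-order law.
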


In order to prove Lemma~\ref{lem:comp}, we first
need the following.

\begin{lemma}
\label{lem:inv}
The product $R^\dagger_{\inv(j)} R_{\inv(i)} \in \R^{L \times L}$
is the inverse of $R^\dagger_{\inv(i)} R_{\inv(j)}$ if and only if
$\Xi_i = \Xi_j$, i.e. $i$ and $j$ are connected
in each chain.
\end{lemma}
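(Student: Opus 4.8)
The plan is to set $A \coloneqq R_{\inv(j)}$ and $B \coloneqq R_{\inv(i)}$, both in $\R^{r \times L}$. Since $R$ is invertible and each $\Xi_j$ has full column rank, $A$ and $B$ also have full column rank $L$, so $A^\dagger A = B^\dagger B = I_L$ and $A^\dagger = (\overline A A)^{-1} \overline A$. The two matrices named in the lemma, $A^\dagger B = R^\dagger_{\inv(j)} R_{\inv(i)}$ and $B^\dagger A = R^\dagger_{\inv(i)} R_{\inv(j)}$, are therefore square $L \times L$ matrices, and for square matrices the phrase ``$A^\dagger B$ is the inverse of $B^\dagger A$'' is equivalent to the single identity $(A^\dagger B)(B^\dagger A) = I_L$. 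So I reduce the lemma to characterizing exactly when this identity holds.

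The first key step is to rewrite the product through the orthogonal projector $P_B \coloneqq B B^\dagger$ onto $\mathrm{col}(B)$: by associativity, $(A^\dagger B)(B^\dagger A) = A^\dagger P_B A$. I claim $A^\dagger P_B A = I_L$ if and only if $\mathrm{col}(A) \subseteq \mathrm{col}(B)$. Sufficiency is immediate, since $\mathrm{col}(A) \subseteq \mathrm{col}(B)$ gives $P_B A = A$ and hence $A^\dagger P_B A = A^\dagger A = I_L$. Necessity is the step I expect to be the main obstacle. Using $A^\dagger A = I_L$, I rewrite the hypothesis as $A^\dagger (I - P_B) A = 0$; multiplying on the left by the invertible matrix $\overline A A$ turns this into $\overline A (I - P_B) A = 0$. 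Because $I - P_B$ is a symmetric idempotent, the left-hand side equals $\overline W W$ with $W \coloneqq (I - P_B) A$, i.e. the Gram matrix of the columns of $W$, which vanishes only if $W$ itself is zero. Thus $(I - P_B) A = 0$, which says precisely $\mathrm{col}(A) \subseteq \mathrm{col}(B)$. As both column spaces have dimension $L$, the inclusion upgrades to the equality $\mathrm{col}(A) = \mathrm{col}(B)$.

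It remains to translate $\mathrm{col}(R_{\inv(j)}) = \mathrm{col}(R_{\inv(i)})$ into $\Xi_i = \Xi_j$. Here the point is that $R_{\inv(j)} = R^{-1} \Xi_j$ simply selects the columns of $R^{-1}$ indexed by the connected components containing $j$ -- one index $c_\ell(j)$ per chain $\ell$ -- and the columns of $R^{-1}$ are linearly independent because $R^{-1}$ is invertible. Hence $\mathrm{col}(A)$ and $\mathrm{col}(B)$ are coordinate subspaces in this basis, and they coincide if and only if the selected index sets $\{ c_\ell(j) : \ell \in [L] \}$ and $\{ c_\ell(i) : \ell \in [L] \}$ are equal as subsets of $[r]$. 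Finally, since $[r]$ partitions by chain and each chain contributes exactly one component containing a given state, equality of these two sets forces $c_\ell(i) = c_\ell(j)$ for every $\ell$ separately, which is exactly $\Xi_i = \Xi_j$. I would present the argument as the chain of equivalences $(A^\dagger B)(B^\dagger A) = I_L \iff \mathrm{col}(A) = \mathrm{col}(B) \iff \Xi_i = \Xi_j$, with the Gram-matrix argument carrying the only nontrivial implication.
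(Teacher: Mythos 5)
Your proof is correct, and it reaches the same pivotal fact as the paper — that the identity forces $\im(R_{\inv(j)}) = \im(R_{\inv(i)})$ — but the execution of the hard direction is genuinely different. The paper argues by contradiction: assuming $\Xi_i \ne \Xi_j$, it asserts the two images differ, picks a witness $x \in \im(R_{\inv(j)}) \setminus \im(R_{\inv(i)})$, derives $\proj_j \proj_i \proj_j = \proj_j$ from the hypothesized identity, and then runs a vector-level orthogonality computation showing the residual $d = \proj_i x - x$ satisfies $\langle d, d\rangle = 0$, a contradiction. You instead prove a clean equivalence directly at the matrix level: writing the hypothesis as $A^\dagger(I - P_B)A = 0$, left-multiplying by $\overline A A$ to get $\overline A (I-P_B) A = \overline W W = 0$ with $W = (I-P_B)A$, and concluding $W = 0$ from the vanishing Gram matrix — so $\mathrm{col}(A) \subseteq \mathrm{col}(B)$, upgraded to equality by dimension. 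The two arguments rest on the same geometry (both use the orthogonal projectors $R_{\inv(\cdot)} R^\dagger_{\inv(\cdot)}$, and "a residual with zero norm is zero"), but yours avoids the witness vector, the WLOG, and the contradiction, and isolates a reusable lemma: $(A^\dagger B)(B^\dagger A) = I_L$ iff $\mathrm{col}(A) \subseteq \mathrm{col}(B)$ for full-column-rank $A, B$. You also spell out two steps the paper compresses: that a one-sided identity suffices because the matrices are square, and that equality of column spaces forces $\Xi_i = \Xi_j$ because these are coordinate subspaces spanned by linearly independent columns of $R^{-1}$, with the index sets matching chain-by-chain. (The paper asserts this last implication in contrapositive form, in passing with a slip — it describes $R_{\inv(j)} \in \R^{r \times L}$ as having "more columns than rows," when it is the reverse.) The trade-off: the paper's argument is more elementary in that it only manipulates inner products of vectors, while yours leans on the closed form $A^\dagger = (\overline A A)^{-1} \overline A$ but yields a tighter chain of equivalences.
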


\begin{proof}
If $\Xi_i = \Xi_j$ then
$R_{\inv(i)} = R_{\inv(j)}$ which of course implies
$R^\dagger_{\inv(j)} R_{\inv(i)} =
R^\dagger_{\inv(i)} R_{\inv(j)} = I_L$.
Conversely, assume that
$\Xi_i \not= \Xi_j$.
Note that the matrices $\Xi_k$ have
rank $L$ as each state $k$ is part of
exactly $L$ different connected components.
The matrix $R$ has full rank, so
$R_{\inv(k)} = R^{-1} \Xi_k \in \R^{r \times L}$
has full column rank $L$.
Combined with the fact that
$R_{\inv(j)} = R^{-1} \Xi_j \not=
R^{-1} \Xi_i = R_{\inv(i)}$
and $R_{\inv(j)}$ and $R_{\inv(i)}$
are both rectangular matrices 
with more columns than rows,
it follows
that $\im(R_{\inv(j)}) \not= \im(R_{\inv(i)})$.
Without loss of generality, there is
an $x \in \im(R_{\inv(j)}) \setminus \im(R_{\inv(i)})$,
otherwise exchange the roles of $i$ and $j$.
We now assume to the contrary that
\[
    R^\dagger_{\inv(j)}
    R_{\inv(i)} R^\dagger_{\inv(i)}
    R_{\inv(j)} = I_L .
\]
We can further use that
$\proj_j \coloneqq R_{\inv(j)} R^\dagger_{\inv(j)}$ and
$\proj_i \coloneqq R_{\inv(i)} R^\dagger_{\inv(i)}$ are
orthogonal projection matrices onto
$\im(R_{\inv(j)})$ and $\im(R_{\inv(i)})$, 
respectively.
Multiplying the above equality by $R_{\inv(j)}$ from the
left and $R^\dagger_{\inv(j)}$ from the right gives
$\proj_j \proj_i \proj_j = \proj_j$ and for our choice of
$x \in \im(R_{\inv(j)})$ that
\[
    \proj_j \proj_i x = \proj_j \proj_i \proj_j x = \proj_j x = x.
\]
We consider the projection into direction
$d \coloneqq \proj_i x - x$
which is orthogonal to $\im(\proj_i)$
and the projection direction
$\proj_j \proj_i x - \proj_i x = x - \proj_i x = -d$
which lies orthogonal to $\im(\proj_j)$.
In particular,
\[
    \langle d, \proj_i x \rangle = 0
    \quad\textrm{and}\quad
    \langle -d, \proj_j x \rangle =
    \langle -d, x \rangle = 0 .
\]
Adding both up, we obtain
$
    0 = \langle d, \proj_i x - x \rangle
    = \langle d, d \rangle
$.
However, $d = \mathbf 0$ implies that
$x \in \im(R_{\inv(i)})$ which is a contradiction.
\end{proof}

We use Lemma~\ref{lem:inv} to prove Lemma~\ref{lem:comp}.

\begin{proof}[Proof of Lemma~\ref{lem:comp}.]
It might be that
$R^\dagger_{\inv(i)} R_{\inv(j)}$ is not
invertible which, by Lemma~\ref{lem:inv},
means that $i$ and $j$ are disconnected in
some chain.
However, note that the matrices 
$\overline{R^\dagger_{\inv(j)}} S_j^{-1}$
and $S_i \overline{R_{\inv(i)}}$ in (\ref{eq:7})
have full column and row-rank
(resp.) of $r \ge L$, which means that the
rank of the product $(Z'_j \overline{Y'_j})^\dagger (Z'_i \overline{Y'_i})$
is determined by 
$R^\dagger_{\inv(i)} R_{\inv(j)}$.
Having rank $L$ is therefore a necessary condition
for companionship.


We may thus assume that
$R^\dagger_{\inv(i)} R_{\inv(j)}$
is invertible, so we can compute
the pseudoinverse
\begin{align*}
    \left( \left( Z'_i \overline{Y'_i} \right)^\dagger \cdot
    \left( Z'_j \overline{Y'_j} \right) \right)^\dagger
    &= \overline{R^\dagger_{\inv(j)}} S_j^{-1}
    \left( R^\dagger_{\inv(i)} R_{\inv(j)} \right)^{-1}
    S_i  \overline{R_{\inv(i)}} .
\end{align*}
Note that this term is identical to
the expression in (\ref{eq:7}) if the product
$R^\dagger_{\inv(j)} R_{\inv(i)}$
is the inverse of $R^\dagger_{\inv(i)} R_{\inv(j)}$.
The latter is even a necessary condition,
as the matrices
$\overline{R^\dagger_{\inv(j)}} S_j^{-1}$
and $S_i \overline{R_{\inv(i)}}$
have full column and row rank, respectively.
It follows by Lemma~\ref{lem:inv} 
that $i$ and $j$ are connected in each chain if
and only if
$(Z'_i \overline{Y'_i})^\dagger (Z'_j \overline{Y'_j})$
is the pseudoinverse of
$(Z'_j \overline{Y'_j})^\dagger (Z'_i \overline{Y'_i})$.
\end{proof}

We can check companionship for
each pair of states $i, j \in [n]$ using Lemma~\ref{lem:comp}
which reveals a set of equivalence classes for
companionship.
We can only obtain the columns of $R^{-1}$
that correspond to connected components
intersecting an individual equivalence class.
We detail this process in the next section,
using a representative state $j$ and a
companion $i$ for each equivalence class.
In section~\ref{subsec:merge}, we then
show how to merge these parts to obtain
the full mixture.

Finally, note that we can avoid
explicit computation of the pseudoinverse
and instead check whether the two matrices
fulfill the defining properties
of a pseudoinverse in a randomized
fashion.


\subsection{Eigendecomposition}
\label{subsec:eigen}

Let us now consider an
equivalence class with representative
state $j$ and companion $i$.
We already pointed out in \eqref{eq:8} that
\begin{align*}
    \left( Z'_j \cdot \overline{Y'_j} \right)^\dagger \cdot
    \left( Z'_i \cdot \overline{Y'_i} \right) &= 
    \overline{R^\dagger_{\inv(j)}} S_j^{-1} R^\dagger_{\inv(j)} \cdot
    R_{\inv(j)} S_i  \overline{R_{\inv(j)}} = \overline{R^\dagger_{\inv(j)}} S_j^{-1} S_i
      \overline{R_{\inv(j)}} .
\end{align*}
By assumption in Theorem~\ref{thm:main}, $S^{-1}_j S_i$ has distinct
diagonal entries, so all eigenspaces
in the above matrix are unidimensional.
We can therefore perform an eigendecomposition
to determine $R_{\inv(j)}$ up to a scaling and
ordering of the eigenvectors. 
In symbols, we obtain a matrix $\tilde R'_j \in \R^{L \times r}$ such that
$\Pi_j D_j \tilde R'_j = R_{\inv(j)}^\dagger \in R^{L \times r}$
for a permutation matrix $\Pi_j \in \R^{L \times L}$ and a diagonal matrix
$D_j \in \R^{L \times L}$.
We fix the ordering arbitrarily and can then undo
the scaling.
To this end, we use a technique from Gupta et al. \cite{gupta2016mixtures},
namely that
we can write the probability to start
a trail in state $i$ and transition to $j$ as
the entries of the vector
$O_j \mathbf 1_n = \overline{P_j} S_j^{-1} Q_j \mathbf 1_n
 = \overline{P_j} \mathbf 1_L \in \R^n$.
Furthermore, $P_j = Y_j P'_j = R_{\inv(j)}^\dagger Y'_j P'_j = \Pi_j D_j \tilde R'_j Y'_j P'_j .$
%
Using $\Pi \cdot \mathbf 1_L = \mathbf 1_L$, this implies 
$  O_j \mathbf 1_n =
    \overline{P_j} \mathbf 1_L =
    \overline{\tilde R'_j Y'_j P'_j}
      \cdot D_j \mathbf 1_L
     \in \R^{n} . $
Since $\tilde R'_j Y'_j P'_j \in \R^{L \times n}$ has full
row rank and $D_j \mathbf 1_L \in \mathbb R^L$,
we can immediately determine $D_j$.
We finally obtain
\[
    \tilde R_j \coloneqq D_j \tilde R'_j = \Pi^{-1}_j R^\dagger_{\inv(j)} \in \R^{L \times r} .
\]
From there, we could compute $R_{\inv(j)}$ which 
is enough to recover transition probabilities
of the $L$ connected components that contain
the state $j$.
In the next section, we show how to merge
$R_{\inv(j)}$ from multiple equivalence classes
to recover the complete mixture.

\subsection{Merging Components}
\label{subsec:merge}

In Section~\ref{subsec:eigen}, we showed
how to obtain the matrix
$\tilde R_j = \Pi^{-1}_j R^\dagger_{\inv(j)}$
for any $j \in [n]$.
The pseudoinverse $\tilde R_j^\dagger
    = R_{\inv(j)} \Pi_j
    = R^{-1} \Xi_j \Pi_j
    \in \R^{r \times L}$
reveals $L$ columns of $R^{-1}$
in arbitrary order (as specified by $\Pi_j$).
Specifically, we obtain
the columns corresponding
to connected components
containing $j$, which is
exactly what $\Xi_j$ indicates.
To obtain all columns of $R^{-1}$,
it is sufficient to combine the columns of
matrices $R^\dagger_j$ for $j \in J$, where $J \subseteq [n]$
is a set of states
such that each of the $r$ connected components
has at least one member in $J$.
As each connected component contains
a representative state, it is
sufficient to combine the columns
of $\tilde R^\dagger_j$ from representative
states $j$.

Having assembled $R^{-1} \Pi^{-1}$ where
$\Pi \in \R^{r \times r}$ is some arbitrary
permutation matrix,
we can use the inverse $(R^{-1} \Pi^{-1})^{-1} = \Pi R$
to reconstruct
\begin{align}
    \label{eq:11}
    \Pi R Y'_j
    = \Pi \Xi_j Y_j \in \R^{r \times L} .
\end{align}
Note that the row in $\Xi_j Y_j$
corresponding to the connected
component $C \in \mathcal C^\ell$ is
either the $\ell$-th row
of $Y_j$ if $j \in C$
or all-zero.
In other words, $\Xi_j Y_j$ contains
$Y_j$, but interspersed with
$r-L$ all-zero rows.
We cannot simply remove the
all-zero rows from $\Pi \Xi_j Y_j$
as this would result in a different
row-permutation for each $Y_j$.
However, we are able to identify
the connected components along with their members
from the non-zero rows (in some arbitrary order
as given by the permutation matrix $\Pi$).
Since we know the $L$ connected components
each representative state is
contained in,
we can assign a label from $[L]$ to all connected
components such that 
each representative state is contained
in exactly one connected component
carrying the label $\ell$, for each $\ell \in [L]$.
This allows us to recover the
matrices $\Pi' Y_j$ where $\Pi' \in \R^{L \times L}$
is another arbitrary permutation matrix:
For each $\ell \in [L]$, we pick
the single non-zero row from
$\Pi \Xi_j Y_j$ that corresponds
to a connected component with label $\ell$.

Similarly, we obtain $\Pi' Z_j$.
With access to $\Pi' Y_j$ and $\Pi' Z_j$ for each
$j \in [n]$, we can recover the full mixture,
up to an arbitrary permutation $\Pi'$.
We first determine the starting probabilities
\[
    \Pi' Z_j \overline{\Pi' Y_j}
    =  \Pi' Z_j \overline{Y_j} \overline{\Pi'}
    =  \Pi' S_j \overline{X_j} \overline{X_j^{-1}} \overline{\Pi'}
    =  \Pi' S_j \overline{\Pi'}
\]
as well as $\Pi' Y_j P'_j = \Pi' X_j^{-1} P'_j = \Pi' P_j .$
The matrices $P_j$ contain the values
$s_i \cdot M^\ell_{ij}$, so we can recover the
transition probabilities $M^\ell_{ij}$
by dividing by $s_i$ and we are done.

\subsection{Running Time}
The asymptotic running time of our algorithm is
$O(n^5 + n^3 L^3 + L^r)$
where $r$ is the total number of connected components
in the mixture, in the context of Theorem~\ref{thm:main}.
Two steps determine this complexity:
First, we need $O(n^5 + n^3 L^3)$
to compute the co-kernel of $\A$ either
through Gaussian elimination or
SVD.\footnote{Computing the SVD of an $m \times n$ matrix
takes time $O(m n^2)$.}
We could speed this step up
by solving the SVD approximately, and it is an
interesting direction to analyze the effect
of the resulting noise on the algorithm.
The other steps involving SVD and
eigendecomposition are asymptotically faster and
easily parallelizable.
Second, finding a proper assignment
$a \colon [r] \to [L]$
to merge components is NP-complete,
so $O(L^r)$ is necessary.

\section{Choosing the parameters}
\label{subsec:degen}


So far, we have assumed knowledge of the two variables $L$ and $r$, but in real-world scenarios they are frequently latent. How can we learn them from the data?
\spara{Number of Chains $L$} Recall that
$O_j = \overline{P_j} \Mout_j$
where we define
$\Mout_j \in \R^{L \times n}$
as the matrix of transition probabilities
out of $j$, i.e.
$\Mout_j(\ell, k) \coloneqq M^\ell_{ik}$ for
all $\ell \in [L]$ and $k \in [n]$.
We should thus be able to derive $L$ from the
rank of $O_j$, assuming that the
matrices $P_j \in \R^{L \times n}$ and
$\Mout_j \in \R^{L \times n}$ have
full rank $L$.
However, simply computing the
rank might be error-prone
due to noise in the 3-trail distribution
or degeneracies in the underlying mixture.
We should rather look at the magnitude of
singular values of $O_j$ to get a more robust estimate.
The following result establishes a relationship
between the $L$-th largest singular
value of $O_j$ and the $L$-th largest singular values of
transition matrices $P_j$ and $\Mout_j$.
\begin{theorem}
    \label{thm:degen}
    For all states $j \in [n]$  ,
    \[
        \sigma_L(P_j) \cdot \sigma_L(\Mout_j) \le \sigma_L(O_j) 
        \le \sqrt L \min\left\{\sigma_L(P_j), \sigma_L(\Mout_j)\right\} .
    \]
\end{theorem}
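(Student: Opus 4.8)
The plan is to reduce everything to the factorization $O_j = \overline{P_j}\,\Mout_j$ noted just before the statement, and then apply standard singular-value inequalities for matrix products, supplementing them with operator-norm bounds that come from the stochasticity of the two factors. Write $A \coloneqq \overline{P_j} \in \R^{n\times L}$ and $B \coloneqq \Mout_j \in \R^{L\times n}$, so that $O_j = AB$ and each factor has at most $L$ nonzero singular values; since we inherit $n \ge 2L$, the quantity $\sigma_L$ is well defined for $A$, $B$, and $O_j$. If either $P_j$ or $\Mout_j$ has rank below $L$, then $O_j$ has rank below $L$, both $\sigma_L(P_j)\sigma_L(\Mout_j)$ and $\sigma_L(O_j)$ vanish, and both inequalities hold trivially; so I may assume both factors have full rank $L$ for the rest of the argument.

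For the lower bound I would use $\sigma_L(O_j) = 1/\|O_j^\dagger\|_2$, which is valid precisely because $O_j$ has rank exactly $L$ (the largest singular value of $O_j^\dagger$ is $1/\sigma_L(O_j)$). Since $A=\overline{P_j}$ has full column rank and $B=\Mout_j$ has full row rank, the reverse-order law for pseudoinverses quoted in the paper gives $O_j^\dagger = (\Mout_j)^\dagger\,\overline{P_j}^\dagger$, whence by submultiplicativity of the operator norm
\[
    \|O_j^\dagger\|_2 \le \|(\Mout_j)^\dagger\|_2\,\|\overline{P_j}^\dagger\|_2 = \frac{1}{\sigma_L(\Mout_j)}\cdot\frac{1}{\sigma_L(P_j)},
\]
using $\|C^\dagger\|_2 = 1/\sigma_L(C)$ for a full-rank factor $C$ together with $\sigma_L(\overline{P_j}) = \sigma_L(P_j)$. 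Taking reciprocals yields $\sigma_L(P_j)\,\sigma_L(\Mout_j) \le \sigma_L(O_j)$.

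For the upper bound I would invoke the submultiplicative singular-value inequalities $\sigma_L(AB) \le \sigma_L(A)\,\sigma_1(B)$ and $\sigma_L(AB) \le \sigma_1(A)\,\sigma_L(B)$, i.e.\ the cases $(i,j)=(L,1)$ and $(1,L)$ of $\sigma_{i+j-1}(AB)\le \sigma_i(A)\sigma_j(B)$. This reduces the task to bounding the top singular values $\sigma_1(A)=\sigma_1(P_j)$ and $\sigma_1(B)=\|\Mout_j\|_2$, where the stochasticity enters through $\|C\|_2 \le \sqrt{\|C\|_1\,\|C\|_\infty}$. The matrix $\Mout_j$ is row-stochastic, so its row sums equal $1$ (giving $\|\Mout_j\|_\infty = 1$) while each column sum is a sum of $L$ entries in $[0,1]$ (giving $\|\Mout_j\|_1 \le L$), hence $\sigma_1(\Mout_j)\le \sqrt{L}$; similarly the normalization $\sum_{\ell,i} s^\ell_i = 1$ forces $\|P_j\|_1 \le 1$ and $\|P_j\|_\infty \le 1$, so $\sigma_1(P_j)\le 1 \le \sqrt{L}$. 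Combining gives $\sigma_L(O_j) \le \sigma_L(P_j)\sigma_1(\Mout_j) \le \sqrt{L}\,\sigma_L(P_j)$ and $\sigma_L(O_j) \le \sigma_1(P_j)\sigma_L(\Mout_j) \le \sqrt{L}\,\sigma_L(\Mout_j)$, and the minimum of the two is exactly the claimed bound.

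Since the reverse-order law and the submultiplicativity inequalities are off-the-shelf, the step I expect to require the most care is the stochasticity-driven bound $\sigma_1(\Mout_j)\le\sqrt{L}$: one must track that it is the \emph{column} sums (bounded by $L$), not the individual entries, that control $\|\Mout_j\|_1$, and verify that the analogous bounds for $P_j$ genuinely follow from the \emph{global} normalization of the starting distribution rather than any per-chain normalization. The only other subtlety is the rank-deficient case, which I dispatch at the outset precisely so that the pseudoinverse argument for the lower bound may assume honest rank-$L$ factors.
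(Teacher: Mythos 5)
Your proof is correct, but it follows a genuinely different route from the paper's. The paper proves both inequalities with the variational (max--min) characterization of $\sigma_L$: for the lower bound it factors the quotient $\|\overline{P_j}\Mout_j u\|_2/\|u\|_2$ into $(\|\Mout_j u\|_2/\|u\|_2)\cdot(\|\overline{P_j}\Mout_j u\|_2/\|\Mout_j u\|_2)$, minimizing the two factors separately and using that $\Mout_j u$ lies in $\R^L$; for the upper bound it pulls out the operator norm, claiming $\|P_j\|_2\le\sqrt L$ because $P_j\in[0,1]^{L\times n}$, and transposes for the symmetric statement. You instead obtain the lower bound from the reverse-order law $(AB)^\dagger=B^\dagger A^\dagger$ (the same footnoted fact the paper invokes in its component-structure step), the identity $\|C^\dagger\|_2=1/\sigma_L(C)$ for rank-$L$ matrices, and submultiplicativity of the operator norm; and the upper bound from $\sigma_{i+j-1}(AB)\le\sigma_i(A)\,\sigma_j(B)$ together with the Schur bound $\|C\|_2\le\sqrt{\|C\|_1\,\|C\|_\infty}$. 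Two remarks on the comparison. First, your explicit rank-$L$ case split is an artifact of the pseudoinverse route (the identity $\sigma_L(O_j)=1/\|O_j^\dagger\|_2$ fails for rank-deficient $O_j$); the paper's variational argument needs no such split, but you dispatch it cleanly, so this is not a gap. Second, your norm bounds rest on firmer ground than the paper's: the paper's claim that $\|P_j\|_2$ is ``maximized when all entries of $P_j$ are 1,'' yielding $\sqrt L$, is not right as stated (an all-ones $L\times n$ matrix has operator norm $\sqrt{Ln}$); the correct justification is exactly the one you give --- the row and column sums of $P_j$ are at most $1$ by the global normalization of the starting probabilities (so in fact $\sigma_1(P_j)\le 1$), while those of $\Mout_j$ are at most $1$ and $L$ respectively --- and as a bonus your argument yields the slightly sharper conclusion $\sigma_L(O_j)\le\sigma_L(\Mout_j)$ on that side of the minimum.
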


The proof of Theorem~\ref{thm:degen} is standard
as it relies mainly on the decomposition
$O_j = P_j \Mout_j$ and the resulting relationship
between the smallest (i.e., $L$-th largest)
singular values of $P_j$ and $\Mout_j$ and the
$L$-th largest singular value of $O_j$. We include it
here for completeness.

\begin{proof}
    First, note that $Q_j = S_j \Mout_j$ and thus
    \[
        O_j = \overline{P_j} S^{-1} Q_j
        = \overline{P_j} \Mout_j .
    \]
    We begin by showing the first inequality.
    We can write the $L$-th largest
    singular value as
    \begin{align*}
        \sigma_L(\overline{P_j} \Mout_j)
        &= \max_{\dim(U) = L} \min_{u \in U} \frac{\| \overline{P_j} \Mout_j u \|_2}{\| u \|_2} \\
        &= \max_{\dim(U) = L} \min_{u \in U} \frac{\|\Mout_j u\|_2}{\| u \|_2} \cdot \frac{\| \overline{P_j} \Mout_j u \|_2}{\|\Mout_j u\|_2}
    \end{align*}
    where $U$ is any subspace of $\R^n$.
    We bound the above by splitting the product
    into
    \begin{align*}
        \sigma_L(\overline{P_j} \Mout_j)
        &\ge \max_{\dim(U) = L} \left( \min_{u \in U} \frac{\|\Mout_j u\|_2}{\| u \|_2} \right) \cdot \left( \min_{u \in U} \frac{\| \overline{P_j} \Mout_j u \|_2}{\|\Mout_j u\|_2}\right) \\
        &\ge \max_{\dim(U) = L} \left( \min_{u \in U} \frac{\|\Mout_j u\|_2}{\| u \|_2} \right) \cdot \left( \min_{w \in \R^L} \frac{\| \overline{P_j} w \|_2}{\|w\|_2}\right)
    \end{align*}
    since $\Mout_j v \in \R^L$.
    Note that the second term is just the smallest (or $L$-th)
    singular value of $P_j$.
    Hence,
    \begin{multline*}
        \sigma_L(\overline{P_j} \Mout_j)
        \ge \max_{\dim(U) = L} \left( \min_{u \in U} \frac{\|\Mout_j u\|_2}{\| u \|_2} \right) \cdot \sigma_L(P_j) = \sigma_L(P_j) \cdot \max_{\dim(U) = L} \min_{u \in U} \frac{\|\Mout_j u\|_2}{\| u \|_2} \\ = \sigma_L(P_j) \cdot \sigma_L(\Mout_j)
    \end{multline*}
    where again $U$ is any subspace of $\R^n$. For the second inequality, we upper bound $ \| \overline{P_j} \Mout_j u \|_2 $ as follows 
    \begin{align*}
        \| \overline{P_j} \Mout_j u \|_2 
        \le  \| P_j \|_2 \cdot \| \Mout_j u \|_2 .
    \end{align*}
Since $P_j \in [0,1]^{L \times n}$,
    the operator norm $\| P_j \|_2$ is maximized
    when all entries of $P_j$ are 1.
    As such, $\| P_j \|_2 \le \sqrt L$.
    We can now bound the $L$-th largest singular
    value by
    \begin{align*}
        \sigma_L(\overline{P_j} \Mout_j)
        &= \max_{\dim(U) = L} \min_{u \in U} \frac{\| \overline{P_j} \Mout_j u \|_2}{\| u \|_2}  \le \| P_j \|_2 \cdot \max_{\dim(U) = L} \min_{u \in U} \frac{\| \Mout_j u \|_2}{\| u \|_2} = \sqrt L \cdot \sigma_L(\Mout_j) .
    \end{align*}
    Similarly, we can obtain
    $\sigma_L(\overline{P_j} \Mout_j) \le \sqrt L \cdot \sigma_L(P_j)$
    by considering the transposed matrix $\overline{\Mout_j} P_j$.
\end{proof}

Assume that the $L$-th singular values of
$P_j$ and $\Mout_j$ for some $j$ are high. This
indicates that the transition probabilities in
and out of $j$ are very dissimilar and
cannot be represented with less than $L$ chains.
On the other hand, having two near-identical chains
in the mixture will induce a low $L$-th
singular value and we should therefore
reduce $L$ to avoid overfitting.
Both ideas allow us to estimate
$L$ robustly: We set $L$ such that
for most of the states $j \in [n]$,
the first $L$ singular values
are large and the remaining singular
values are small.

\spara{Number of Connected Components} If the mixture satisfies the conditions of Theorem~\ref{thm:main}, in particular
if the co-kernel of $\A$ is spanned by
indicator vectors for the connected components,
we can deduce $r$ from the rank of $\A'$.
However, as with estimating $L$, this is
prone to error and we should rather look
at the singular values of $\A'$.
In the following, we show how to derive
an upper bound on the $(r+1)$-th smallest
singular value $\sigma_{2Ln -r}$ that shows
a necessary condition for the robust
estimation of $r$.
A lower bound is much harder to attain
as it would require a complete combinatorial
characterization of the co-kernel of $\A$ which
seems difficult to obtain.
Empirically, we do not have troubles in estimating
$r$.

For a cut
$\emptyset \subsetneq S \subsetneq V^\pm$,
we define $\Mcut_S \in \R^{L \times |V^\pm|^2}$
as the matrix of all
transition probabilities
across the cut.
We set
$\Mcut_S(\ell, (i^-, j^+)) \coloneqq s^\ell_i M^\ell_{i, j}$
if $i^-$ and $j^+$ are 
on different sides of $S$ and 
$\Mcut_S(\ell, (x, y)) \coloneqq 0$
for all remaining $x, y \in V^\pm$.
Let now $S$ be a cut such that
the smallest singular value
\begin{align}
    \label{eq:9}
    \sigma_L(\Mcut_S) = \min_{w \in \R^L} \frac{\| w^\top \Mcut_S \|_2}{\| w \|_2}
\end{align}
is small, implying
that there is high similarity in
the chains' transition probabilities
across the cut.
Recall that the $r$ smallest singular values
$\sigma_{2Ln - r+1}(\A)$, $\sigma_{2Ln-r+2}(\A)$, $\dots, \sigma_{2Ln}(\A)$
are all zero, as
$r$ is exactly the dimension of the
co-kernel of $\A$.
The similarity between different chains
as measured by $\sigma_{\min}(\Mcut_S)$
relates to the 
$(2Ln - r)$-th singular value of $\A$
in the following way.

\begin{theorem}
    \label{thm:sigma}
    For any cut $\emptyset \subsetneq S \subsetneq V^\pm$, $\sigma_{2Ln - r}(\A) \le \frac 1 {|S|} \sigma_L(\Mcut_S) .$
\end{theorem}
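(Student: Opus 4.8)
The plan is to bound $\sigma_{2Ln-r}(\A)$ from above by exhibiting a single test vector with small Rayleigh quotient. Since we are under the hypotheses of Theorem~\ref{thm:main}, the co-kernel $\ker(\A)$ is exactly the $r$-dimensional span of the indicator vectors $\xi_1,\dots,\xi_r$, so $\mathrm{rank}(\A)=2Ln-r$ and $\sigma_{2Ln-r}(\A)$ is the smallest \emph{nonzero} singular value. By the variational characterization of singular values, restricting the map $v\mapsto v^\top\A$ to the orthogonal complement of its left null space gives the clean form
\[
    \sigma_{2Ln-r}(\A) = \min_{0\neq v\,\perp\,\ker(\A)} \frac{\|v^\top\A\|_2}{\|v\|_2}.
\]
Hence it suffices to produce one vector $v\in\R^{2Ln}$, orthogonal to $\ker(\A)$, whose Rayleigh quotient is at most $\tfrac{1}{|S|}\sigma_L(\Mcut_S)$.

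First I would build the test vector directly from the cut. Let $w^\star\in\R^L$ be a unit vector attaining the minimum in \eqref{eq:9}, so $\|(w^\star)^\top\Mcut_S\|_2=\sigma_L(\Mcut_S)$. Define $v$ block-wise over the vertices $x\in V^\pm$ by $v_x\coloneqq w^\star$ whenever $x\in S$ and $v_x\coloneqq\mathbf 0$ otherwise. The point of this ``cut indicator'' is that the difference $v_{j^+}-v_{i^-}$ appearing in the quadratic form equals $\pm w^\star$ precisely when $i^-$ and $j^+$ lie on opposite sides of $S$, and vanishes otherwise. Substituting into the identity $\|v^\top\A\|_2^2=\sum_{i,j}\big((v_{j^+}-v_{i^-})^\top S_i M_{ij}\big)^2$ collapses the sum to exactly the edges crossing the cut, which by the definition of $\Mcut_S$ is nothing but $\|(w^\star)^\top\Mcut_S\|_2^2=\sigma_L(\Mcut_S)^2$. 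This pins down the numerator essentially for free.

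The remaining work is the denominator, and this is where I expect the real obstacle. Since $v$ need not be orthogonal to $\ker(\A)$, I would pass to its projection $v_\perp$ onto $\ker(\A)^\perp$; because every $\xi_q^\top\A=\mathbf 0$ we have $v^\top\A=v_\perp^\top\A$, so only the denominator changes and $\sigma_{2Ln-r}(\A)\le\sigma_L(\Mcut_S)/\|v_\perp\|_2$. The trivial estimate $\|v_\perp\|_2\le\|v\|_2=\sqrt{|S|}$ already delivers $\sigma_{2Ln-r}(\A)\le\sigma_L(\Mcut_S)/\sqrt{|S|}$, which is the bound this construction produces most naturally. Matching the sharper factor $1/|S|$ claimed in the statement is the delicate point: projecting onto $\ker(\A)^\perp$ can only shrink the denominator, so the stronger constant must come either from a test function better adapted to the component structure (one maximizing $\|v_\perp\|_2$ relative to $\|v^\top\A\|_2$) or from how $|S|$ enters the normalizations of $v$ and $\Mcut_S$. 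I would therefore concentrate the careful analysis on the Gram matrix of $\{\xi_1,\dots,\xi_r\}$ restricted to the cut, since the overlaps $\langle v,\xi_q\rangle=|C\cap S|\,w^\star_\ell$ (for $\xi_q=\xi_C^\ell$) are exactly what govern how much mass of $v$ the co-kernel can absorb, and hence the precise value of $\|v_\perp\|_2$; the min--max identity and the quadratic-form collapse I would treat as routine scaffolding around this estimate.
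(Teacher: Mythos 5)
Your test vector, the collapse of the quadratic form to $\|(w^\star)^\top\Mcut_S\|_2^2=\sigma_L^2(\Mcut_S)$, and the variational framing are essentially identical to the paper's proof (the paper phrases the variational step via Courant--Fischer on the $(r+1)$-dimensional subspace $K=\mathrm{span}(v,\xi_1,\dots,\xi_r)$ rather than minimizing over $\ker(\A)^\perp$, but these are the same device). The genuine gap is your ``trivial estimate'': from $\sigma_{2Ln-r}(\A)\le\sigma_L(\Mcut_S)/\|v_\perp\|_2$ and $\|v_\perp\|_2\le\|v\|_2=\sqrt{|S|}$ you conclude $\sigma_{2Ln-r}(\A)\le\sigma_L(\Mcut_S)/\sqrt{|S|}$, but these two inequalities point in opposite directions: an upper bound on $\|v_\perp\|_2$ only makes $\sigma_L(\Mcut_S)/\|v_\perp\|_2$ \emph{larger} than $\sigma_L(\Mcut_S)/\sqrt{|S|}$, so nothing follows. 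What the argument needs is a \emph{lower} bound on $\|v_\perp\|_2$ --- exactly the quantity you defer to the Gram-matrix analysis --- so as written the proposal proves neither the $1/\sqrt{|S|}$ bound you call natural nor the stated $1/|S|$ bound.

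That missing lower bound is not a technicality, and it cannot be supplied in general. Since the vectors $\xi_1,\dots,\xi_r$ have pairwise disjoint supports, the projection is explicit: $\|v_\perp\|_2^2=|S|-\sum_\ell\sum_{C\in\mathcal C^\ell}|S\cap C|^2(w^\star_\ell)^2/|C|$. If $S$ is a union of connected components of some chain $\ell$, then $w^\star=\mathbf 1_\ell$ is a minimizer in \eqref{eq:9} with $\sigma_L(\Mcut_S)=0$, and the corresponding $v$ lies entirely in $\ker(\A)$, so $v_\perp=0$. For such cuts the inequality of Theorem~\ref{thm:sigma} itself fails under the hypotheses of Theorem~\ref{thm:main}: the left side is strictly positive (the co-kernel has dimension exactly $r$) while the right side vanishes. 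So your instinct about where the difficulty sits is exactly right --- and it is also where the paper's own proof is deficient: the assertion there that ``$v$ maximizes the final term'' in the Rayleigh quotient over $K$ tacitly assumes $v\perp\mathrm{span}(\xi_1,\dots,\xi_r)$; without that assumption the maximum over $K$ equals $\|v^\top\A\|_2^2/\|v_\perp\|_2^2$, which is precisely your quantity. Your closing remark on constants is also correct: with squares tracked consistently, this construction can yield at best $\sigma_{2Ln-r}^2(\A)\le\sigma_L^2(\Mcut_S)/|S|$, i.e.\ a $1/\sqrt{|S|}$ rate; the $1/|S|$ factor in the statement traces to a dropped square in the paper's final display.
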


\begin{proof}
    Let $w \in \R^L$ be the minimizer of the
    quotient in (\ref{eq:9}) with length
    $\|w\|_2 = 1$.
    Let $v \in \R^{2Ln}$ where
    $v_x \coloneqq w$ for $x \in S$ and
    $v_x \coloneqq 0$,
    otherwise.
    Let $K \subseteq \R^{2Ln}$ be the $(r+1)$-dimensional
    vector space
    spanned by $v$ and the indicator
    vectors $\xi_1, \xi_2, \dots, \xi_r$.
    We can bound the $(2Ln-r)$-th
    smallest singular value of $\A$
    by
    \begin{align}
        \sigma^2_{2Ln-r}(\A)
        =
        \min_{\dim(U) = r+1} \max_{u \in U} \frac{\| u^\top \A \|^2_2}{\| u \|^2_2}
        \le
        \max_{u \in K} \frac{\| u^\top \A \|^2_2}{\| u \|^2_2}
        =
        \max_{u \in K} \frac 1 {\| u \|^2_2}
            \sum_{i, j} \left((u_{j^+} - u_{i^-})^\top S_i M_{ij}\right)^2
        \label{eq:4}
    \end{align}
    where the inequality holds because $K$ is also $(r+1)$-dimensional
    and the final equality is by definition of $\A$.
    We now show that for
    $u = \lambda_0 v + \lambda_1 \xi_1 + \cdots + \lambda_r \xi_r \in K$
    and any fixed pair of vertices $x, y \in V^\pm$,
    \begin{align}
        \label{eq:10}
        (u_x - u_y)^\top S_i M_{ij}
        = \lambda_0 (v_x - v_y)^\top S_i M_{ij}.
    \end{align}
    Fix any $\xi_i$ and let $\ell$ be the chain
    containing the connected component $C \in \mathcal C^\ell$
    that corresponds to $\xi_i$.
    Either $x$ and $y$ are both on the same side of $C$ (i),
    or one of $x$ and $y$ is in $C$ and the
    other one is not (ii).
    In case of (i), $(\xi^\ell_C)_x = (\xi^\ell_C)_y$
    and $\lambda_i \xi_i$ cancels out in $u_x - u_y$.
    In (ii), $M^\ell_{ij} = 0$
    as $x$ and $y$ are not connected in $M^\ell$.
    Consequently, the $\ell$-th entry of
    $u_x - u_y$ vanishes.
    Overall, only $\lambda_0 v$ remains of $u$ as
    reflected in the RHS of (\ref{eq:10}).
    It follows that $v$ maximizes the final term in
    (\ref{eq:4}) and thus
    \begin{align*}
        \sigma^2_{2Ln-r}(\A)
        &\le
        \frac 1 {\| v \|^2_2} 
            \sum_{i, j} \left((v_{j^+} - v_{i^-})^\top S_i M_{ij}\right)^2 \\
        &= 
        \frac 1 {\| v \|^2_2} \left(
            \sum_{i^- \in S, j^+ \notin S} (- w^\top S_i M_{ij})^2
        +
            \sum_{i^- \notin S, j^+ \in S} (w^\top S_i M_{ij})^2
            \right) \\
        &= \frac 1 {\| v \|^2_2} \| w^\top \Mcut_S \|_2^2
        = \frac 1 {\| v \|^2_2} \sigma_L(\Mcut_S).
    \end{align*}
    where the first equality is by definition of $v$.
    Finally, $v$ has length $\| v \|_2^2 = | S | \cdot \| w \|^2_2 = | S |$,
    so $\sigma^2_{2Ln-r}(\A) \le \frac 1 {|S|} \sigma_L(\Mcut_S)$.
\end{proof}

We can express the bound more naturally through
the similarity of any pair of chains.

\begin{corollary}
    \label{cor:tv}
    We have $\sigma^2_{2Ln-r}(\A) \le \TV(M^\ell, M^{\ell'})$
    for any two chains $\ell \not= \ell'$ in the mixture.
\end{corollary}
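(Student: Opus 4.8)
The plan is to instantiate Theorem~\ref{thm:sigma} with a conveniently chosen cut, upper-bound the smallest singular value $\sigma_L(\Mcut_S)$ in its right-hand side by a cheap trial direction rather than the true minimizer, and then convert the resulting $\ell_2^2$-expression into a total variation ($\ell_1$) distance using the fact that every entry in play is a probability. Throughout I fix the two chains $\ell \neq \ell'$, and I exploit that Theorem~\ref{thm:sigma} holds for \emph{every} cut $\emptyset \subsetneq S \subsetneq V^\pm$, so I may pick the one that is easiest to analyze.

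First I would take the cut $S = \{1^-, \dots, n^-\}$ consisting of all incoming copies, so that $|S| = n$ and, crucially, every edge $(i^-, j^+)$ crosses $S$. With this choice no transitions are dropped: $\Mcut_S(\ell, (i^-, j^+)) = s^\ell_i M^\ell_{ij}$ for all $i, j \in [n]$. To bound $\sigma_L(\Mcut_S)$ from above I would not solve for the minimizer of \eqref{eq:9}, but instead substitute the trial vector $w = e_\ell - e_{\ell'} \in \R^L$ (the $\ell$-th standard basis vector minus the $\ell'$-th), for which $\|w\|_2^2 = 2$. By the variational definition \eqref{eq:9}, $\sigma_L(\Mcut_S) \le \|w^\top \Mcut_S\|_2 / \|w\|_2$, and since $w^\top S_i M_{ij} = s^\ell_i M^\ell_{ij} - s^{\ell'}_i M^{\ell'}_{ij}$ a direct expansion gives $\|w^\top \Mcut_S\|_2^2 = \sum_{i, j} (s^\ell_i M^\ell_{ij} - s^{\ell'}_i M^{\ell'}_{ij})^2$.

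The decisive step is converting this sum of squares into total variation. Writing $a_{ij} = s^\ell_i M^\ell_{ij} - s^{\ell'}_i M^{\ell'}_{ij}$, both $s^\ell_i M^\ell_{ij}$ and $s^{\ell'}_i M^{\ell'}_{ij}$ lie in $[0,1]$, so $|a_{ij}| \le 1$ and hence $a_{ij}^2 \le |a_{ij}|$. Summing gives $\sum_{i,j} a_{ij}^2 \le \sum_{i,j} |a_{ij}| = 2\,\TV(M^\ell, M^{\ell'})$, so that $\sigma_L(\Mcut_S) \le \sqrt{2\,\TV(M^\ell,M^{\ell'})}/\sqrt{2} = \sqrt{\TV(M^\ell,M^{\ell'})}$. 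Feeding this into Theorem~\ref{thm:sigma} with $|S| = n$ yields $\sigma_{2Ln-r}(\A) \le \tfrac{1}{n}\sqrt{\TV(M^\ell,M^{\ell'})}$, and squaring gives $\sigma^2_{2Ln-r}(\A) \le \tfrac{1}{n^2}\TV(M^\ell,M^{\ell'}) \le \TV(M^\ell,M^{\ell'})$, where the surplus factor is slack we simply discard.

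The main obstacle is not any delicate estimate but the passage from the $\ell_2^2$ quantity that the cut machinery naturally produces to the $\ell_1$ quantity that defines total variation; this hinges entirely on the entrywise bound $|a_{ij}| \le 1$, which is where the probabilistic nature of the entries is used. Two secondary points deserve care: first, that it is legitimate (and in fact cleaner) to bound $\sigma_L(\Mcut_S)$ with a non-minimizing trial vector $w$, since we only need an upper bound; and second, that the factors $1/|S|$ and $1/\|w\|_2^2$ always point in the helpful direction, so the inequality is robust to the precise constants. Finally, the exact constant one obtains depends on the convention for $\TV(M^\ell, M^{\ell'})$ (here the total variation between the two induced edge distributions $(i,j) \mapsto s^\ell_i M^\ell_{ij}$, for which $\sum_{i,j}|a_{ij}| = 2\,\TV$); a different normalization only shifts the constant and leaves the argument intact.
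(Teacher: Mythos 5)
Your use of Theorem~\ref{thm:sigma} and the $\ell_2^2$-to-$\ell_1$ conversion (via entries bounded by $1$) are both sound, but the last link of your chain is broken, and it is not a normalization issue. Because the cut matrix is defined with the starting probabilities built in, $\Mcut_S(\ell,(i^-,j^+))=s^\ell_i M^\ell_{ij}$, your argument produces the \emph{weighted} quantity $\sum_{i,j}\bigl|s^\ell_i M^\ell_{ij}-s^{\ell'}_i M^{\ell'}_{ij}\bigr|$, whereas the corollary concerns the unweighted distance $\TV(M^\ell,M^{\ell'})=\tfrac{1}{2n}\sum_{i,j}\bigl|M^\ell_{ij}-M^{\ell'}_{ij}\bigr|$ (this is the definition the paper's proof expands). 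These two quantities are not related by any constant factor: take $M^\ell=M^{\ell'}$ but $s^\ell\neq s^{\ell'}$. Then $\TV(M^\ell,M^{\ell'})=0$, while your quantity equals $\sum_i\bigl|s^\ell_i-s^{\ell'}_i\bigr|>0$, so your final inequality ``$\tfrac{1}{n^2}\TV'\le\TV(M^\ell,M^{\ell'})$'' is false. (The corollary itself does hold in this example: each $S_i\Mout_i$ has two parallel rows, so $\sigma_L(\Mcut_{\{i^-\}})=0$ and hence $\sigma_{2Ln-r}(\A)=0$; your chain just fails to establish it.)

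The deeper obstruction is the single global cut $S=\{1^-,\dots,n^-\}$ together with one fixed test direction $w=e_\ell-e_{\ell'}$: that one direction must simultaneously annihilate the weighted rows for \emph{every} state, and in the example above $\sigma_L(\Mcut_S)$ is genuinely bounded away from $0$ even though the corollary's right-hand side is $0$, so no constant-chasing rescues this route. The paper instead applies Theorem~\ref{thm:sigma} to each singleton cut $\{i^-\}$ separately, keeping the minimization over directions \emph{inside} the sum over states. Concretely, it uses
$\min_{\|w\|_2=1}\|w^\top \Mout_i\|_2^2 \ \ge\ \min_{\|w\|_2=1}\|w^\top S_i\Mout_i\|_2^2=\sigma_L^2\bigl(\Mcut_{\{i^-\}}\bigr)$,
where the minimizer on the right may rotate to compensate for unequal starting probabilities (something your fixed $w$ cannot do), and only then plugs the test direction $\tfrac{1}{\sqrt 2}(\mathbf 1_\ell-\mathbf 1_{\ell'})$ into the \emph{unweighted} matrices $\Mout_i$, where the same entrywise bound you use converts $\ell_2^2$ into the genuine $\TV$. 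If you redo your computation per state $i$ with cut $\{i^-\}$ and insert that weight-removal step, you essentially recover the paper's proof.
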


\begin{proof}
    We simply calculate
    \begin{align*}
        \TV(M^\ell, M^{\ell'})
        =
        \frac 1 {2n} \sum_{i, j} \left| M_{ij}^\ell - M^{\ell'}_{ij} \right|
        =
        \frac 1 {2n} \sum_i \left\| (\mathbf 1_\ell - \mathbf 1_{\ell'})^\top \Mout_i \right\|_1
    \end{align*}
    We can lower bound each $\| \cdot \|_1$ 
    in the sum by $\| \cdot \|_2^2$ since the entries in the vector
    $(\mathbf 1_\ell - \mathbf 1_{\ell'})^\top \Mout_i$ are at most $1$.
    Hence,
    \begin{align*}
        \TV(M^\ell, M^{\ell'})
        \ge
        \frac 1 {2n} \sum_i \left\| (\mathbf 1_\ell - \mathbf 1_{\ell'})^\top \Mout_i \right\|^2_2
        =
        \frac 1 n \sum_i \left\| \frac 1 {\sqrt 2} (\mathbf 1_\ell - \mathbf 1_{\ell'})^\top \Mout_i \right\|^2_2
        \ge
        \frac 1 n \sum_i \min_{\|w\|_2 = 1} \left\| w^\top \Mout_i \right\|^2_2
    \end{align*}
    where the last inequality is due to $\| \frac 1 {\sqrt 2} (\mathbf 1_\ell - \mathbf 1_{\ell'}) \|_2 = 1$.
    All entries in $\Mout_i$ are positive, so we can bound
    \[
        \| w^\top \Mout_i \|^2_2 \ge \| w^\top S_i \Mout_i \|^2_2
        = \| w^\top \Mcut_{\{i^-\}} \|^2_2 .
    \]
    By Theorem~(\ref{thm:sigma}),
    \[
        \min_{\|w\|_2 = 1} \| w^\top \Mcut_{\{i^-\}} \|^2_2
        = \sigma^2_L(\Mcut_{\{i^-\}})
        \ge \sigma^2_{2Ln-r}(\A)
    \]
    which directly implies what we wanted to prove.
\end{proof}

\section{Experimental Results} 
\label{sec:exp}

\def\arraystretch{0.5}

\subsection{Experimental Setup} 
\label{subsec:setup}

\spara{Datasets} We use both real and synthetic data in our experiments. 
To generate synthetic mixtures that allow us to compare methods knowing the groundtruth, we take three arguments as input: (i) the total number of states $n$, (ii) the number of chains $L$ and (iii) the total number of  weakly connected components across all chains $r$. Notice that when $r=L$, there exists exactly one connected component per chain, meaning that all chains are (weakly) connected.  To create a mixture, we sample uniform random stochastic matrices for the transition probabilities
$M^\ell$ of all chains $\ell \in [L]$. The vector of all starting probabilities
$(s^\ell_k)_{\ell, k} \in \R^{Ln}$ is also sampled uniformly at random from all
non-negative vectors summing to $1$. If $r > L$, we need to create $r-L$ additional connected components, which we achieve this by splitting some chains: First, we assign each of the $r-L$ splits to the $L$ chains uniformly at random. Second, having determined the number of connected components in each chain $\ell$, we assign each state randomly to a connected component. We then disconnect each newly formed component
$C \in \mathcal C^\ell$ from the rest of the graph by setting $M^\ell_{ij} = 0$ for all
$i \in C, j \notin C$ and $i \notin C, j \in C$, and normalize such that $M^\ell$
remains stochastic. We then sample a sequence of trails from this mixture to obtain the 3-trail
distribution.
 
We have used various real-world datasets from the UCI Machine Learning Repository. The {\it MSNBC} dataset contains information about page view sequences for almost one million user sessions. Page views are classified into 17 categories. We cut each sequence into 3-trails and try to learn the underlying mixture for different values of $L$. 
The {\it mushroom} datasets contains 8000 mushroom samples, each associated with 22 categorical variables (features). Since $\expm$ does not scale both in terms of $n$ and $L$ we choose a random subset of  $n=30$ mushroom samples  and 10 features. We add a trail $(i, j, k)$ whenever the mushroom samples $i$, $j$, and $k$ have the same expression for a feature. Naturally, the mixture consists of $L=10$ chains, one per feature. Each categorical value of each feature   induces a fully connected component in the chain.

\noindent \spara{Evaluation metrics}  We use two measures to evaluate the performance
of a recovery algorithm, both based on the total variation (TV) distance. The TV distance of two probability distributions $p$ and $q$ on a discrete sample space  $\Omega$ is half their $\ell_1$ distance, i.e., 
$TV(p, q) = \frac 1 2 \sum_{\omega \in \Omega} |p(\omega) - q(\omega)|$. 
The first metric measures the distance between the ground-truth mixture $\M$ and the 
learned mixture $\mathcal N$.  Since a mixture contains $L>1$ chains, we first find a good alignment between the $L$ chains of the two mixtures through a permutation $\sigma \in S_L$ that matches the chains in $\mathcal N$
to the chains $\mathcal M$, such that the sum of their TV-distances are minimal (we can find $\sigma$ efficiently, e.g. by using the Hungarian algorithm~\cite{korte2011combinatorial}). Once we find $\sigma$, we simply compute the sum of TV-distances over all $L$ pairs of chains. Formally, we define the \textrm{recovery-error}  of $\mathcal N$ for the groundtruth mixture $\mathcal M$ as 
\begin{equation}
\label{eq:re}
    \textrm{recovery-error}(\M, \mathcal N) \coloneqq
    \frac 1 {2Ln} \min_{\sigma \in S_L} \sum_{\ell=1}^L
        \sum_{i=1}^n TV(M^\ell_i, N^{\sigma(\ell)}_i).
\end{equation}
For real-world instances, $\mathcal M$ is typically not known and the Markovian assumption may not be entirely true~\cite{chierichetti2012web}. As a proxy to the recovery error, we compute the TV-distance of the given and learned 3-trail distributions, which we call the \emph{trail-error}.

\noindent \spara{Competitors}  We compare our reconstruction algorithm $\casvd$ to the GKV paper ($\gkvsvd$ algorithm) that is the main competitor. We also implement the   expectation maximization $\expm$ algorithm that overall performs well in terms of quality but is very slow. 

\noindent \spara{Implementation notes}
We adapt the implementation to the presence of noise.
First, we compute how far the products of Lemma~\ref{lem:comp}
are from being pseudoinverses and cluster states accordingly
to determine the companionship equivalence classes.
To mitigate the effect of potential mistakes in the clustering, we
repeat the selection of representative states and their companions 5 times.
Second, since rows of the matrices in \eqref{eq:11} are no
longer exactly zero, we use an IP solver to find the
assignment with least overlap.

\noindent \spara{Machine specs} Our code was implemented in Python~3 and is available online.\footnote{\url{https://github.com/285714/WWW2023/blob/master/experiments.ipynb}}
The experiments were run on a 2.9 GHz Intel Xeon Gold 6226R processor with 384GB of RAM. 

 \hide{
\begin{enumerate}
    \item Fix $n, L$, \#samples are fixed and we range the total number of CCs along all $L$ chains. We plot the total variation distance vs. the \# CCs. 
    \babis{Note that the SVD is modified. We should add a line for the original Gupta et al.~\cite{gupta2016mixtures} paper}
    
    \item Sample complexity plot is really good but hard to explain. We need either to average more experiments so hopefully we see more smooth curves, and test the stability with respect to the threshold.  The second plot is way more interpretable.

    \item Plots with run time and tv distance as a function of the number of states should be decoupled. 
    
    \item Plot  Run time and iterations vs number of states $n$.
    
    \item Plot  Run time and iterations vs number of CCs for $n=15, 20$. 
    
    \item Plot  Run time and iterations vs number $L$ for connected chains. 
    
    \item The plot for the similarity is really good. Remove the tv distance, and instead have different $L$ values (e.g., $L=2,5,8,10$ while fixing $n=20$. 
    
    \item  The $y$-label for singular values should become $\frac{\sigma_{L+2}}{\sigma_{L+1}}$ rather than sval-ratio-4. Ideally we would like to show that the number of singular values going to 0 correlates well with the number of chains ``coinciding''. The theorem mentions $\sigma_{L+1}$ but we can have two plots where in the first plot we have the degeneracy you defined (one chain coinciding with another one), and a second plot where (two chains coincide with two other chains). Plot ratios $\frac{\sigma_{L+2}}{\sigma_{L+1}}$, $\frac{\sigma_{L+3}}{\sigma_{L+2}}$ and the actual singular values.  Also change the legends.

    \item Once we have shown that we can choose wisely the right $L$, then plot the tv-distance for this reconstruction using our method. 

    \item Combine our method with EM. We need to outline that the trail reconstruction error is a proxy for what we care, and that there exist ranges for which the EM maybe improving the trail-error but the reconstruction error worsens. 
    
    \item Last thing to look at is how different EM and SVD mixtures look like when the tv-distance is reasonable. 
    
    \item Increase font on x,y-labels (maybe 18?) and add latex annotations wherever needed.

    \item \labis{Application:}
    Try to interpret the results for mushrooms dataset. 
    \href{https://archive.ics.uci.edu/ml/datasets/Libras+Movement}{hands dataset}, \href{https://archive.ics.uci.edu/ml/datasets.php?format=&task=&att=&area=&numAtt=&numIns=&type=seq&sort=nameUp&view=table}{sequential datasets of interest}.
\end{enumerate}

\bigskip
}

\subsection{Synthetic Experiments}

The probability of a 3-trail $i \to j \to k$ can be   computed  using the equation $p(i, j, k) \coloneqq    \sum_{\ell=1}^L s^\ell_i \cdot M^\ell_{ij} \cdot M^\ell_{jk}$ when the groundtruth mixture is known. In a real-world setting we can only estimate it from data. It can be shown using Chernoff bounds that $O(n^3 \log n/\epsilon^2)$ samples suffice to estimate this probability within $\pm \epsilon$ for all states $j$. However, in many real-world scenarios we do not have enough samples or they are expensive to generate. We explore the effect of the noise on the 3-trail probability distribution.

\spara{Learning with the exact 3-trail distribution}  Figure~\ref{fig:puvn}(c) shows results when given the exact 3-trail distribution from the groundtruth mixture. When $r=L$, both $\gkvsvd$ and our method $\casvd$ can perfectly recover the mixture, and EM performs well achieving low error for both the trail distributions and the recovery error after 100 iterations, which was the upper bound we set in our code (EM already takes considerably longer, cf. Figure~\ref{fig:scal}). It is likely that with more iterations it would also find the groundtruth mixture.  However, once $r>L$ which implies that at least one of the chains is disconnected (i.e., $r=4$), the performance of $\gkvsvd$ and EM quickly deteriorate.  As the lack of connectivity increases, i.e., $r$ grows, their performance drops further.

\begin{figure}
    \centering
    \begin{tabular}{ccc}
      \includegraphics[width=0.3\linewidth]{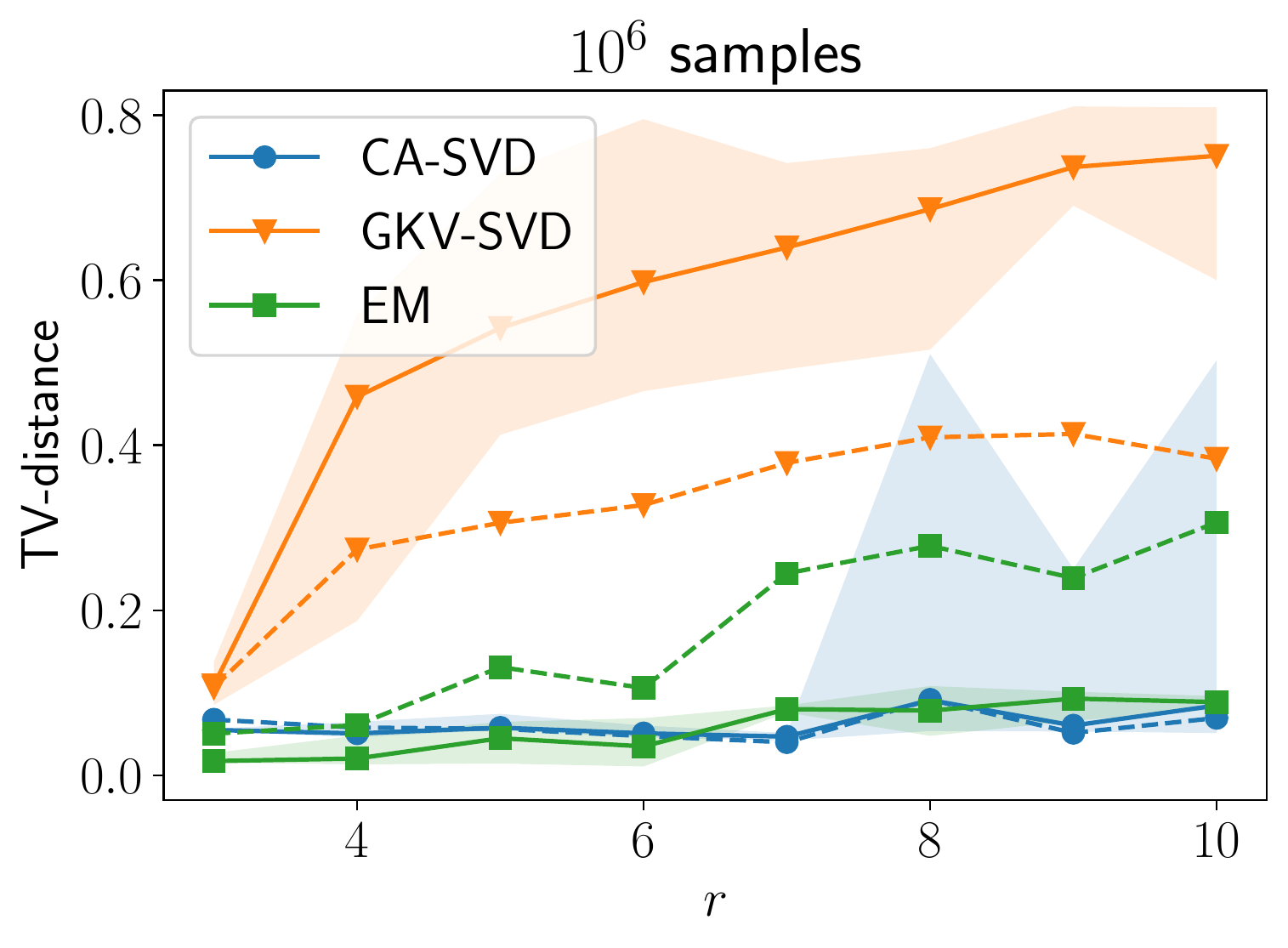} &
      \includegraphics[width=0.3\linewidth]{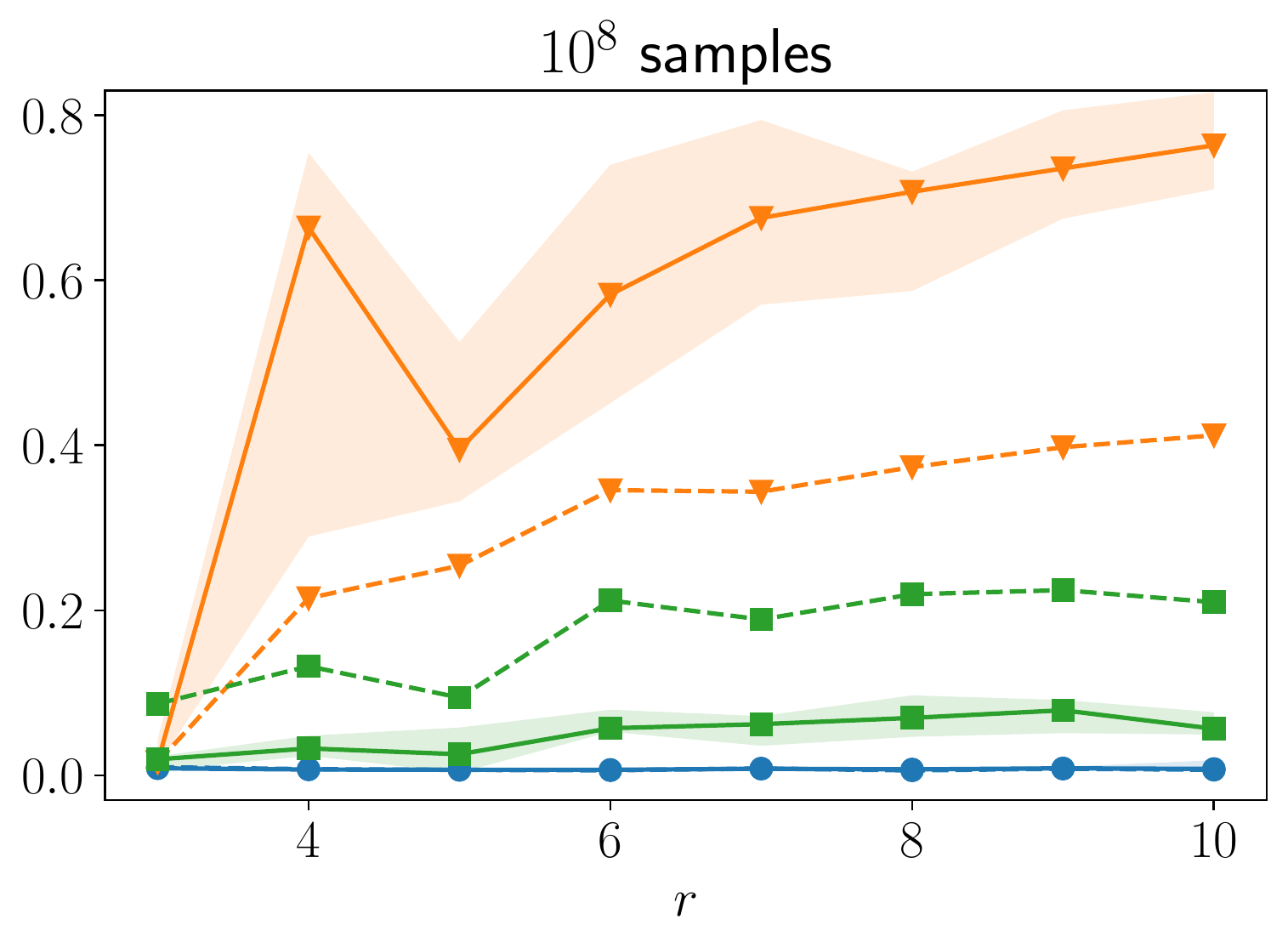} &
      \includegraphics[width=0.3\linewidth]{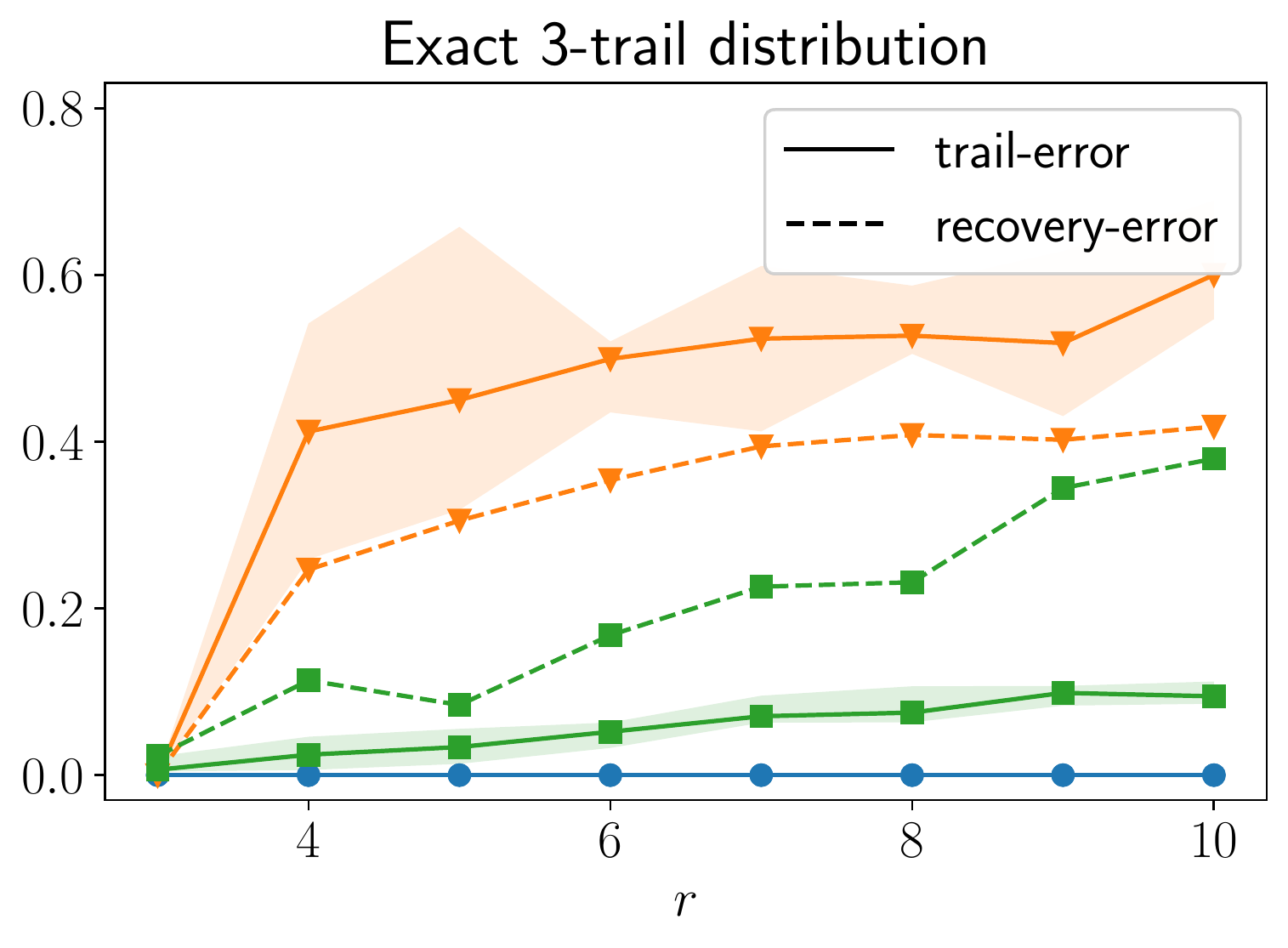} \\ 
         (a) & (b) & (c)
    \end{tabular}
    \negfigsp
     \caption{\label{fig:puvn} 
     Performance under $10^6$ samples (a), $10^8$ samples, and the exact 3-trail distribution from groundtruth (c) on a mixture of $L=3$ chains and $n=20$ states.  Solid lines show the median trail-error of each method,  and the shaded region around each line contains the 25th to 75th percentile. Dashed lines show the recovery-error (Equation~\eqref{eq:re}).
     }
 \end{figure}

\spara{Learning  with the empirical 3-trail distribution} We explore the effect of noise in learning the mixture as follows: We generate  random mixtures with $L=3$ and $n=20$.  We vary $r$, the total number of connected components. We sample $10^6$ and $10^8$ trails that constitute to two levels of noise (more samples correspond to less noise). We repeat each experiment 10 times.

Figures~\ref{fig:puvn}(a),~(b) show our findings when we have access to $10^6$ and $10^8$ 3-trail samples. We observe that for $10^8$ samples, we essentially obtain the same performance as if there were no noise. For $10^6$ samples, the error of the 3-trail distribution affects the $\gkvsvd$ more than $\expm$ and our method $\casvd$. Furthermore, both $\gkvsvd$ and $\expm$ are not well suited when $r > L$, i.e., there is lack of connectivity in at least one of the groundtruth chains. Interestingly, our reconstruction algorithm is robust to noise and increasing lack of connectivity. 

\noindent \spara{Sample complexity} 
For each method, we plot
the required number of samples to achieve TV-distance (for trail-error and recovery error) below $0.05$ in Figure\ref{fig:sc}(a). Each experiment is repeated 10 times. Since for $r > 3$, the matrix $\A$ is no
longer well-distributed, so $\gkvsvd$ is unable to hit this threshold for any number of samples.  Increasing $r$ leaves our method $\casvd$ mostly unaffected, while as before, $\expm$ performs worse for large $r$. We additionally plot the sample complexity of expectation maximization restricted to only $50$ and 100 iterations, which we refer to as $\expm^{50}$ and $\expm^{100}$. This version performs considerably worse than $\expm$.


\begin{figure}
 \centering
\begin{tabular}{cc}
 \includegraphics[width=0.45\linewidth]{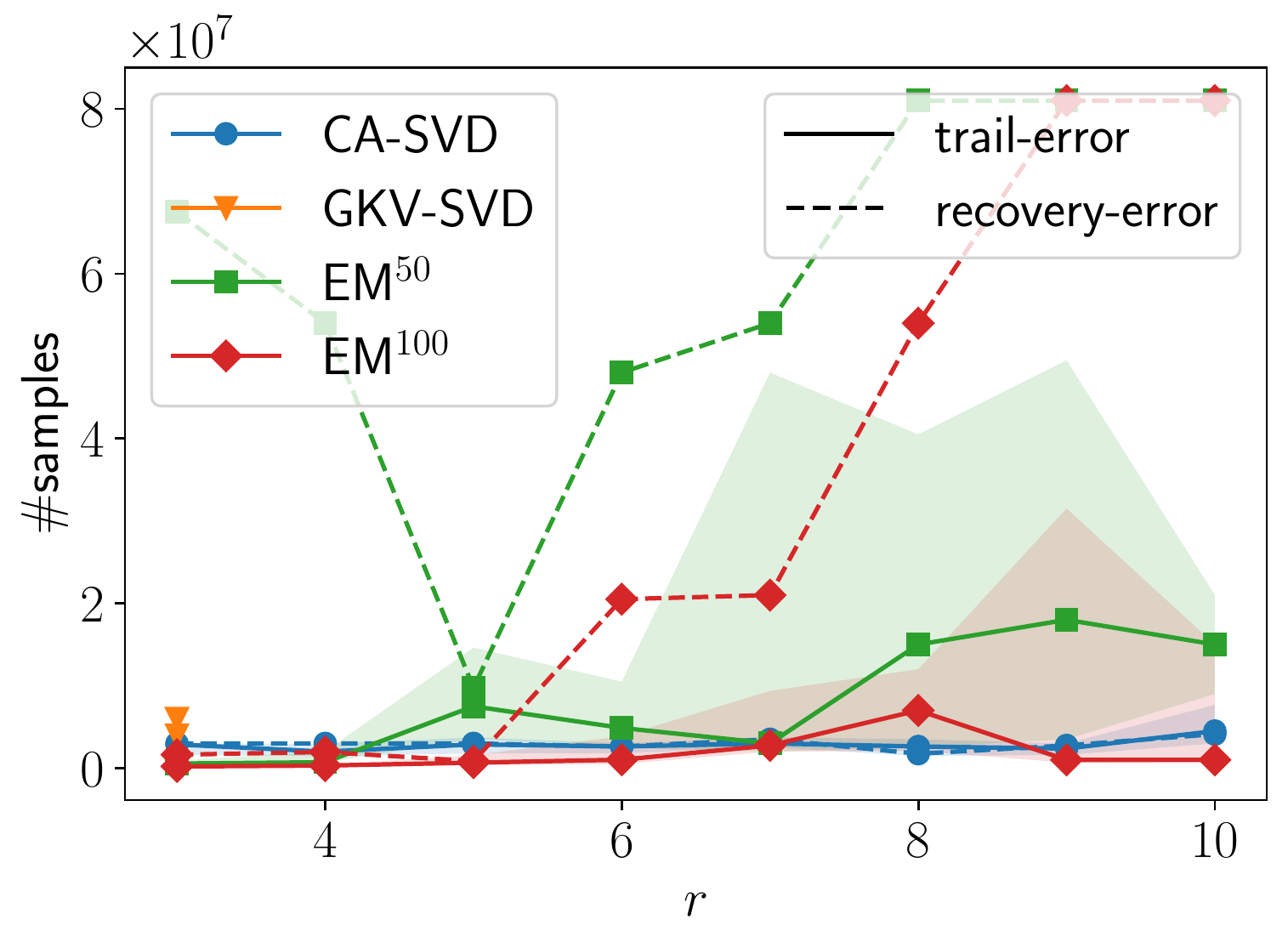} &
 \includegraphics[width=0.45\linewidth]{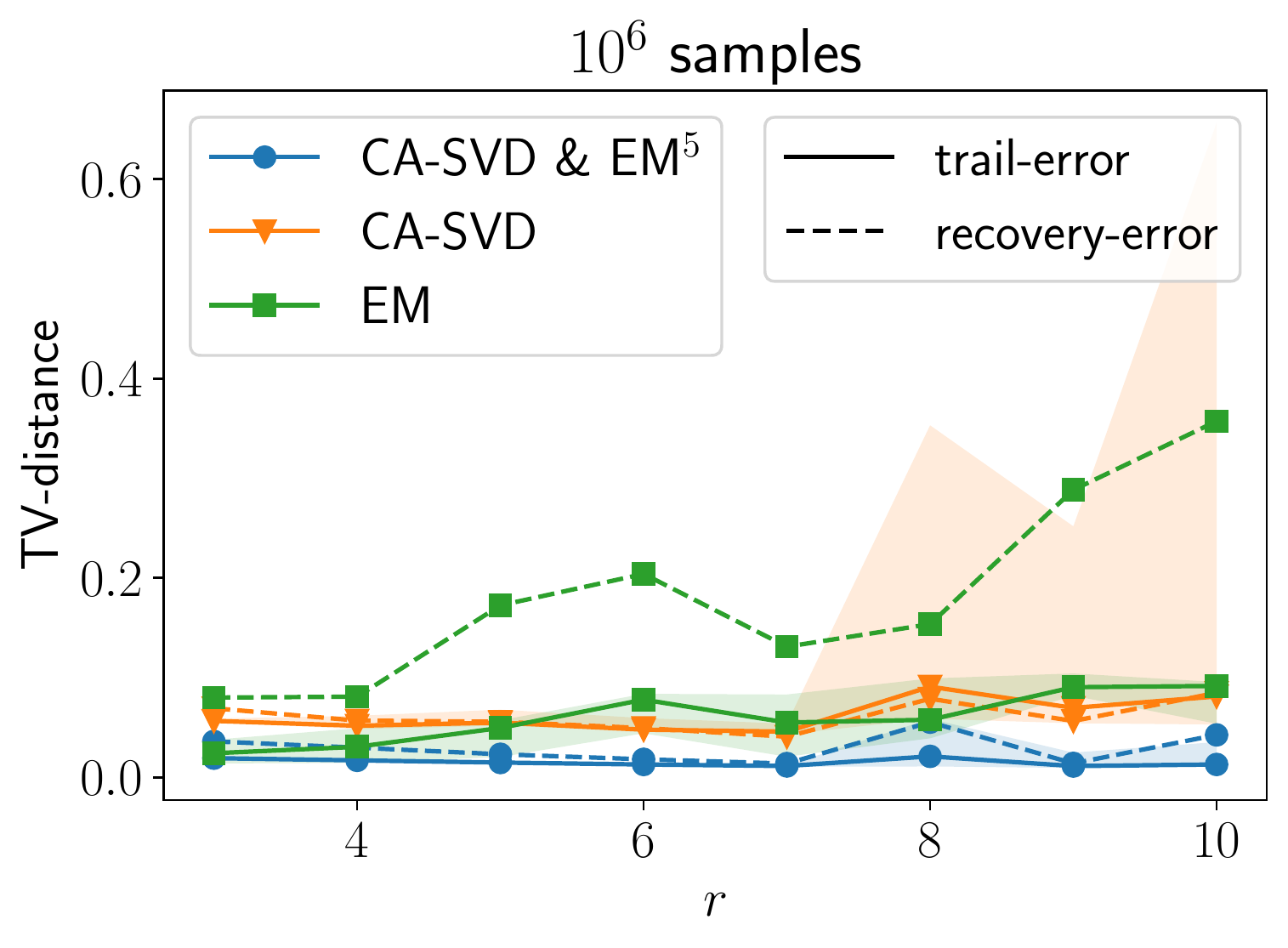} \\
 (a) & (b) 
 \end{tabular}
 \negfigsp
   \caption{\label{fig:sc}  (a) Sample complexity 
   for varying number of connected components $r$ on a mixture with $L=3$ chains
   and $n=20$ states.  (b) Total variation distance versus $r$ for $\expm$, $\casvd$ and their combination.} 
 \end{figure}

\noindent \spara{Combining $\casvd$ with $\expm$} We explore what happens when we first run our recovery method and use the output as starting point for EM. Figure~\ref{fig:sc}(b) shows
that the combination far outperforms the two individual approaches. Furthermore, refining with $\expm$ for only a few iterations
(we use 5 iterations in our experiments) is sufficient.


\noindent \spara{Scalability} In order to be practically applicable, our method needs to be performant even for large values of $n$, $L$, and $r$. As pointed out by Gupta et al. \cite{gupta2016mixtures}, $\gkvsvd$ vastly outperforms $\expm$ for
increasing $n$ and $L$. We replicate these results in Figure~\ref{fig:scal}(a), where we can clearly see that
the running time of $\expm$ scales quadratically with $n$, while the number of iterations until convergence is not affected
by $n$. The additional computation needed for $\casvd$ over $\gkvsvd$ does not add substantial running time and is partly due to overhead from the solver initialization.

Figure~\ref{fig:scal}(b) shows the
running time as a function of the number of
connected components $r$.
Running times for $\casvd$ and $\gkvsvd$
are mostly unaffected from increasing $r$,
but we can see that $\expm$ needs more
iterations and a higher running time
to handle the additional complexity
of disconnected chains.

\begin{figure}[htbp]
    \centering
    \begin{tabular}{cc}
        \includegraphics[width=0.45\linewidth]{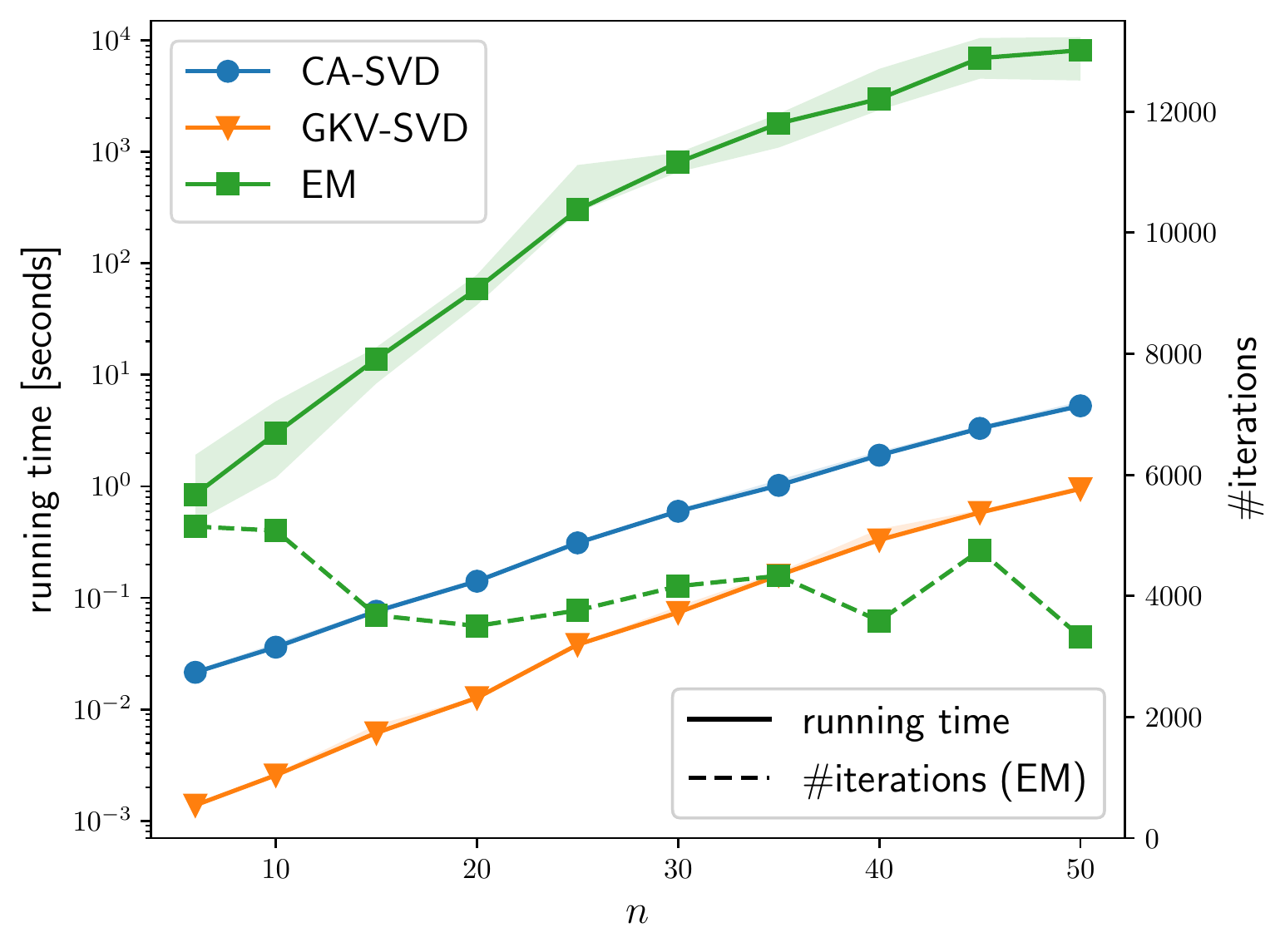} &
        \includegraphics[width=0.45\linewidth]{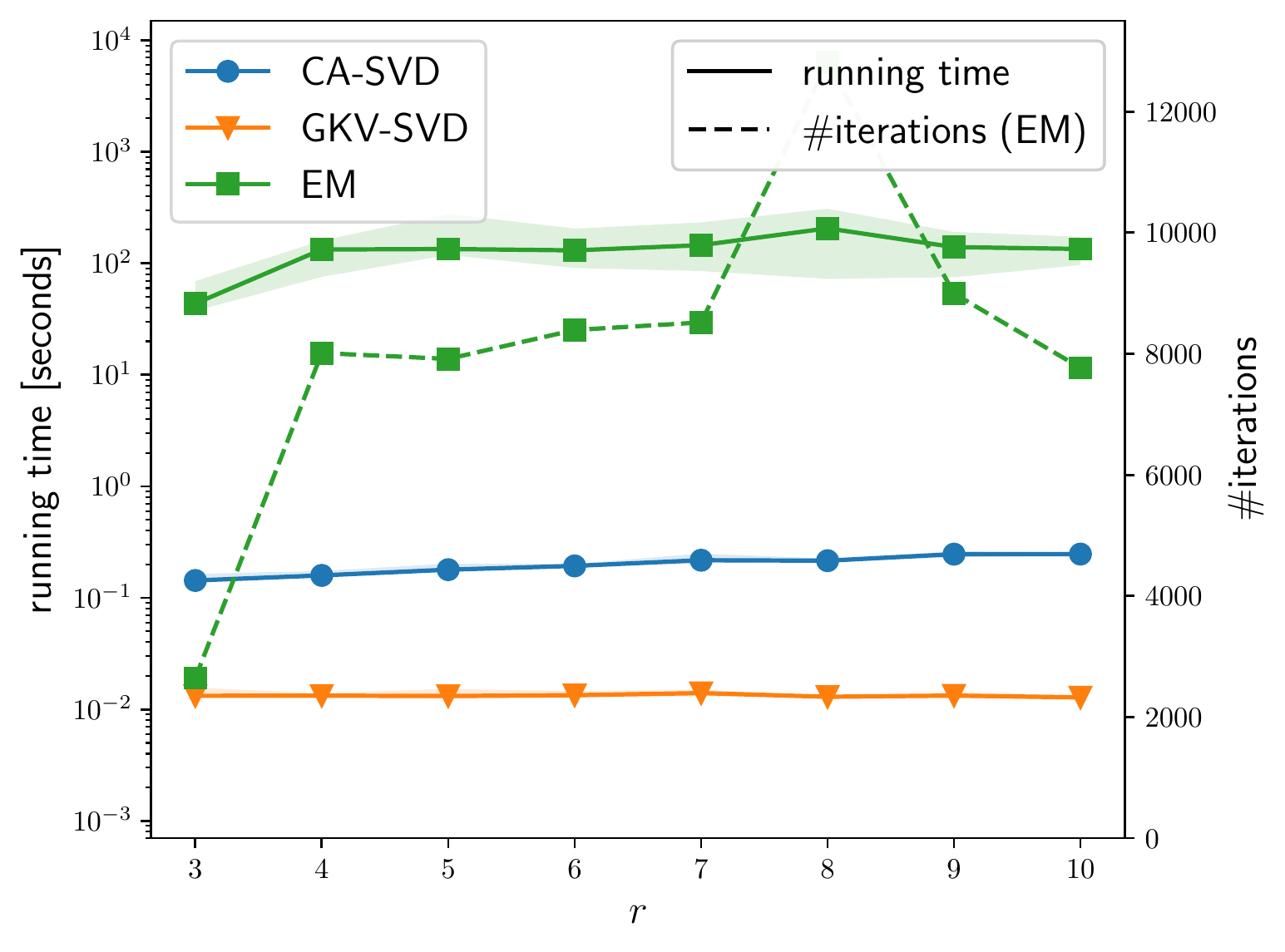} \\
        (a) & (b)
    \end{tabular}
    \negfigsp
    \caption{\label{fig:scal} Plot (a) shows the running time
    (solid lines; the shaded area contains the 25th to 75th percentile)
    and number of iterations until convergence for EM (dashed line)
    for varying $n$ on $L=3$ chains.
    Plot (b) shows the same for
    varying $r$ and a fixed number of $n=20$ states. }
\end{figure}

\noindent \spara{Choosing the number of chains $L$}  We show how singular values can guide the choice of $L$ even when there is a strong form of degeneracy. To this end, we take a random mixture $\M$ and make one chain (say $M^1$)  more similar to another chain (say $M^2$).
For a similarity parameter  $\lambda \in [0,1]$, we update $M^1 \gets (1-\lambda) M^1 + \lambda M^2$.   This means that for $\lambda=0$, we retain  the original chains;
for $\lambda=1$, the first and second chain are identical.  Figure~\ref{fig:sval} shows  information about the average $i$-th largest singular value  $\bar \sigma_i \coloneqq \frac 1 n \sum_{j=1}^n \sigma_i(O_j)$  for some $i \in [n]$  that are interesting in the context of Theorem~\ref{thm:degen} in two scenarios with increasing degeneracy.  As a reference point for degeneracy, we also show the $L$-th largest (i.e. the smallest) singular value  of the transition matrices $P_j$ and $M^+_j$, i.e.
\begin{equation} 
\label{eq:sigmamin}
\sigma_{\min} \coloneqq \min_{j} \min \{\sigma_{\min}(P_j), \sigma_{\min}(M^+_j)\} .
\end{equation}

Figure~\ref{fig:sval}(a) shows  how $\sigma_L$ remains large for $\lambda < 1$ compared to $\sigma_{L+1}$, as promised by the lower bound  in Theorem~\ref{thm:degen}.  As $\lambda$ approaches $1$ and $\sigma_{\min}$ drops to $0$, we further observe that the upper bound
in Theorem~\ref{thm:degen} forces $\sigma_L$ down, but  $\sigma_{L+1}$ remains large.
We can best express this through the ratio of  consecutive singular values as shown in
Figure~\ref{fig:sval}(b). If singular value $i$  has maximum ratio $\sigma_i / \sigma_{i+1}$,
this indicates that we should set $L = i$.
\begin{figure}[htbp]
    \centering
    \begin{tabular}{cccc}
        \includegraphics[width=0.23\linewidth]{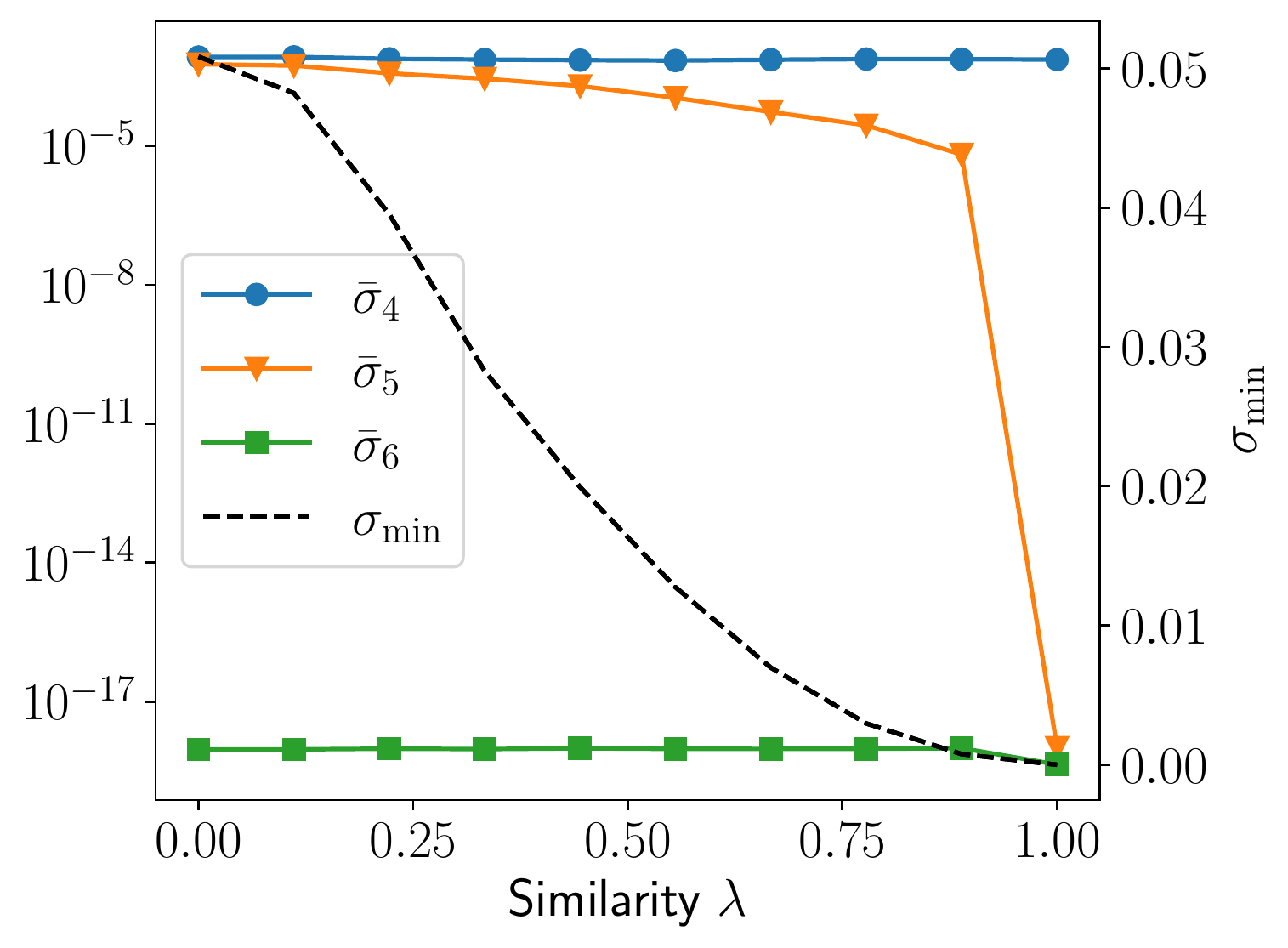} &
        \includegraphics[width=0.23\linewidth]{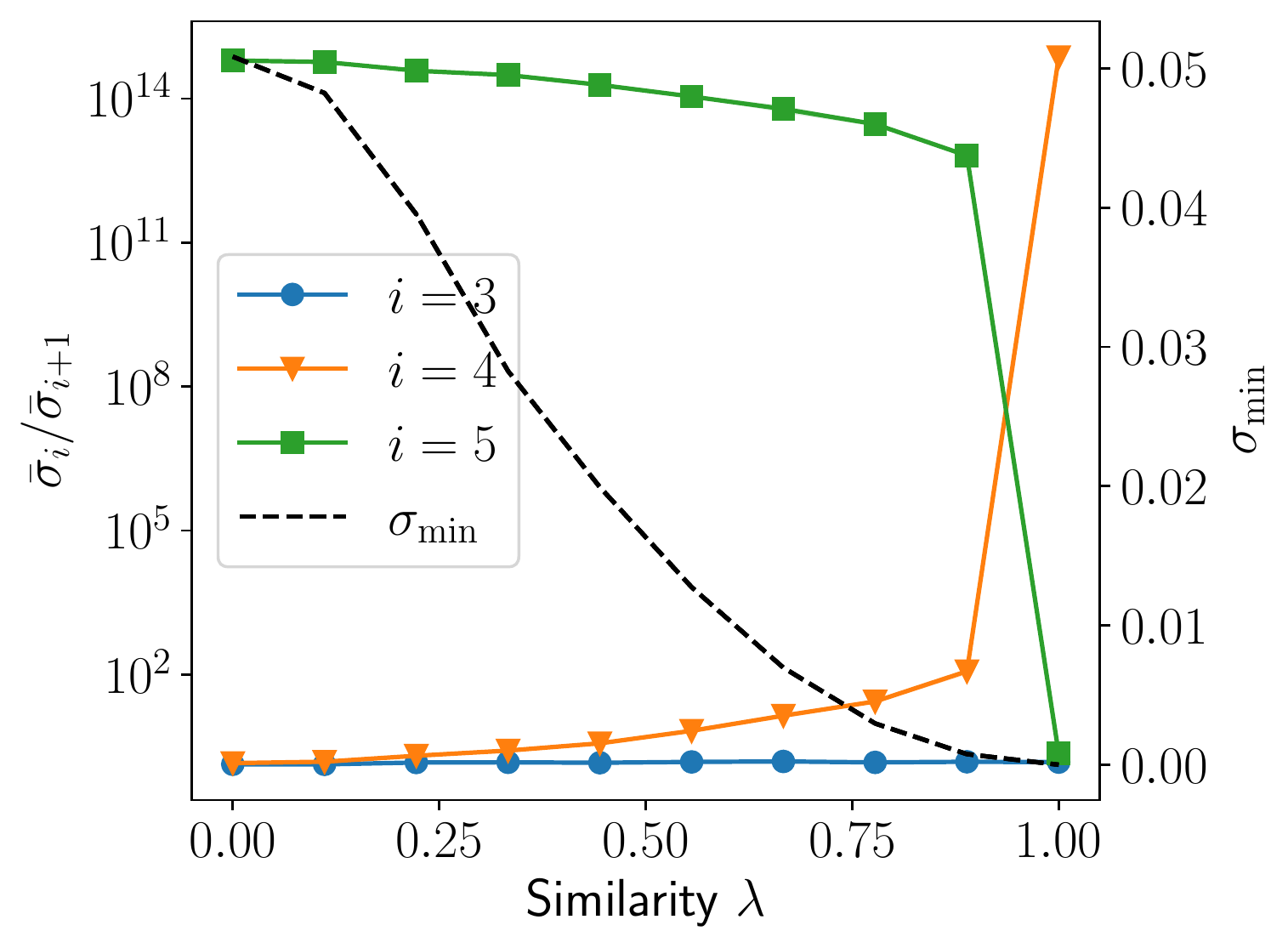} & \includegraphics[width=0.23\linewidth]{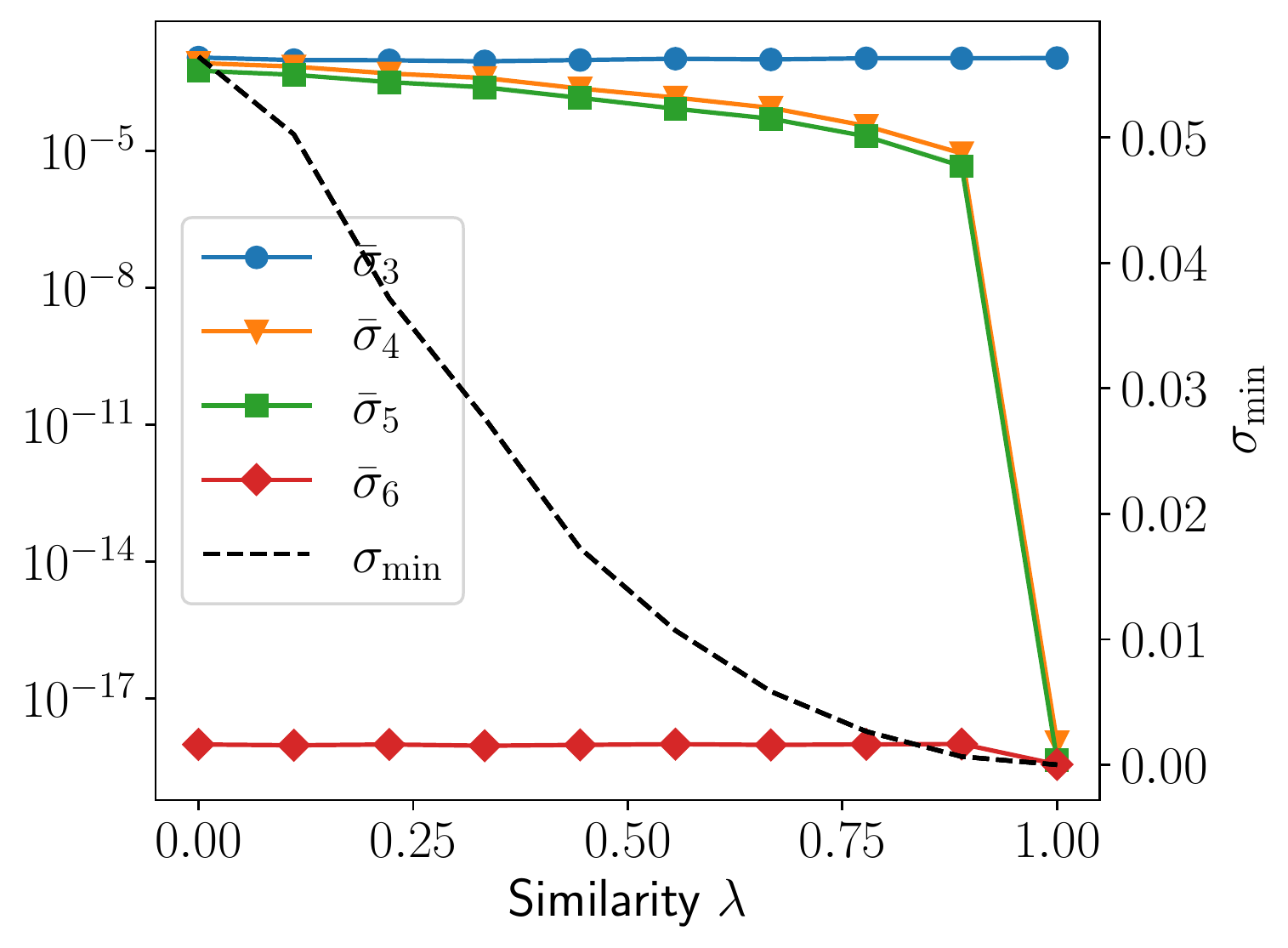} &
        \includegraphics[width=0.23\linewidth]{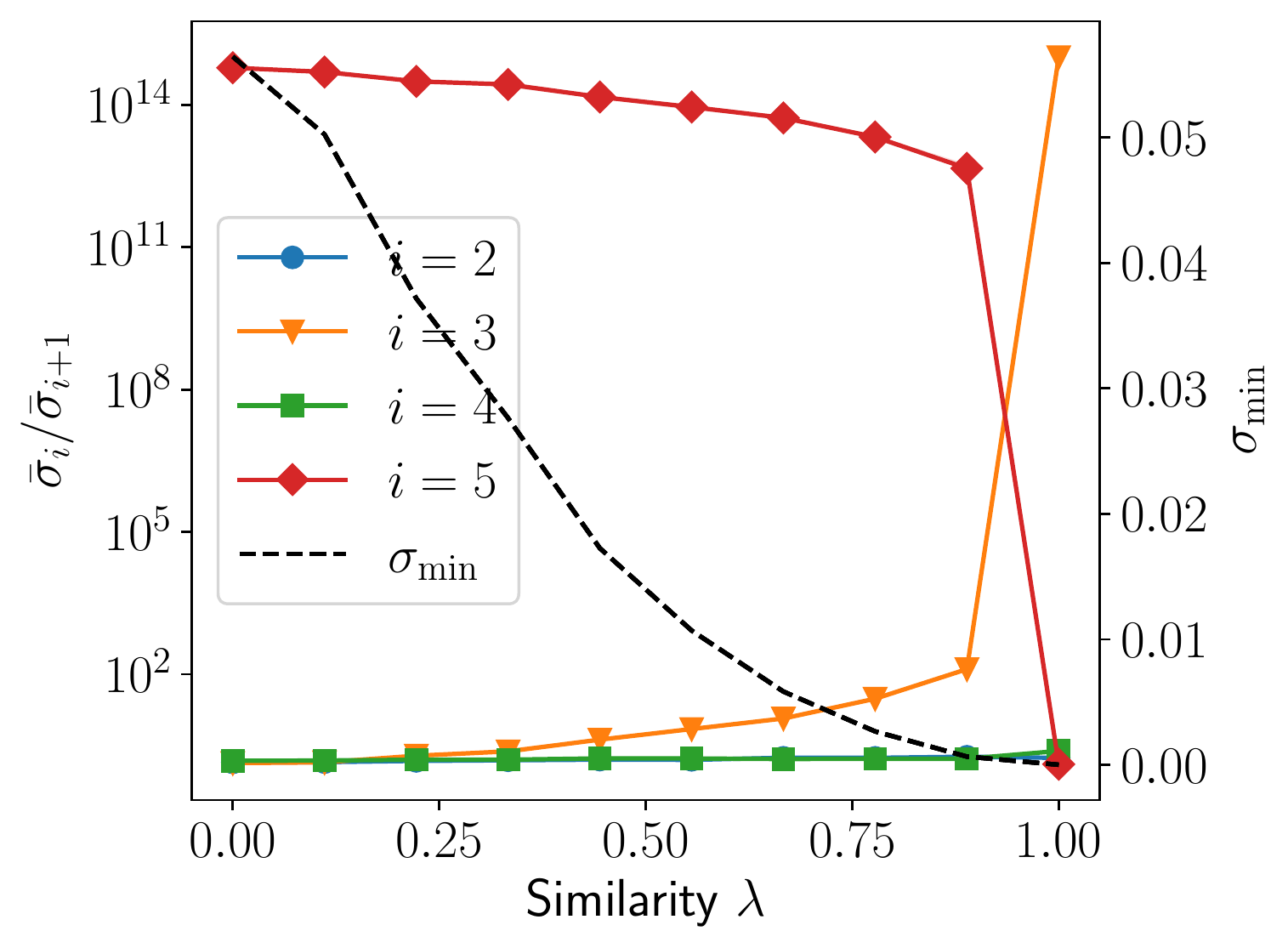} \\
         (a) & (b) &   (c) & (d)  \\ 
    \end{tabular}
    \negfigsp
    \caption{\label{fig:sval} Average singular values for varying similarity $\lambda$,
    on a mixture with $L=5$ chains and $n=20$ states for Scenario 1 (a,b) and Scenario 2 (c,d) respectively.  Plots~(a,c) show the singular values (left axis) and $\sigma_{\min}$ (right axis, see Eq.~\eqref{eq:sigmamin}) and (b,d) their ratios respectively.}
\end{figure}

In addition to the degeneracy described above (Scenario 1), we showcase the effects of additionally making $M^3$ more similar to $M^4$ (Scenario 2). Figures~\ref{fig:sval}(c,d) show our results for the 2nd scenario. Observe the gradual decrease of $\sigma_{\min}$ and the sudden drop of both singular values $\bar \sigma_4$ and $\bar \sigma_5$. The ratio $\bar \sigma_4 / \sigma_5$ only increases slightly, while $\bar \sigma_3 / \bar \sigma_4$ explodes as the degeneracy increases, suggesting that we reduce the number of chains to $L=3$. In general, as a rule of thumb, we suggest that if the $i$-th singular value achieves the maximum ratio $\sigma_i / \sigma_{i+1}$, we should set $L = i$. 
The algorithm's choice of $L$ and recovery  error are shown in Figure~\ref{fig:sval2}.

\begin{figure}[!t]
    \centering
    \begin{tabular}{cc}
        \includegraphics[width=0.45\linewidth]{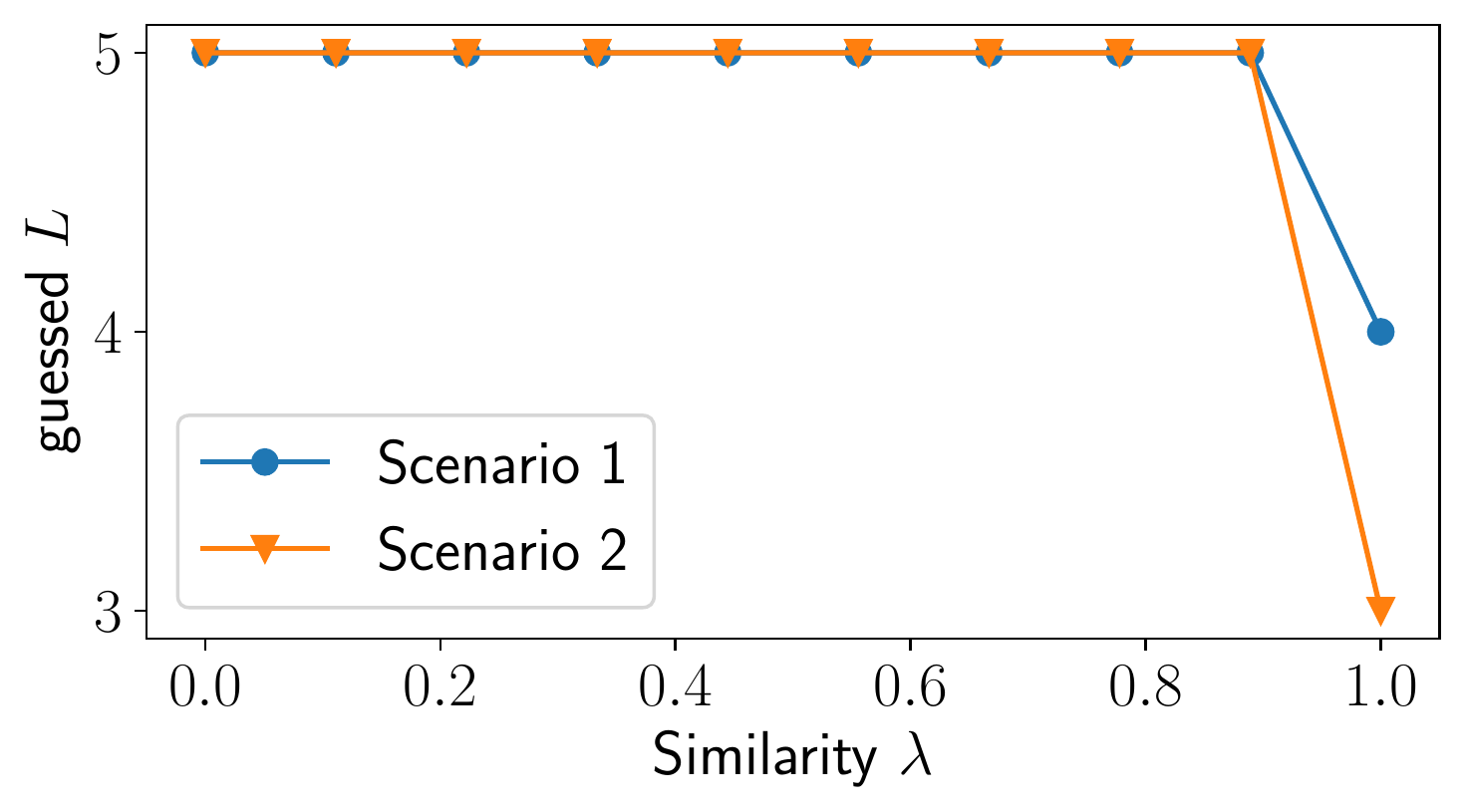} &
        \includegraphics[width=0.45\linewidth]{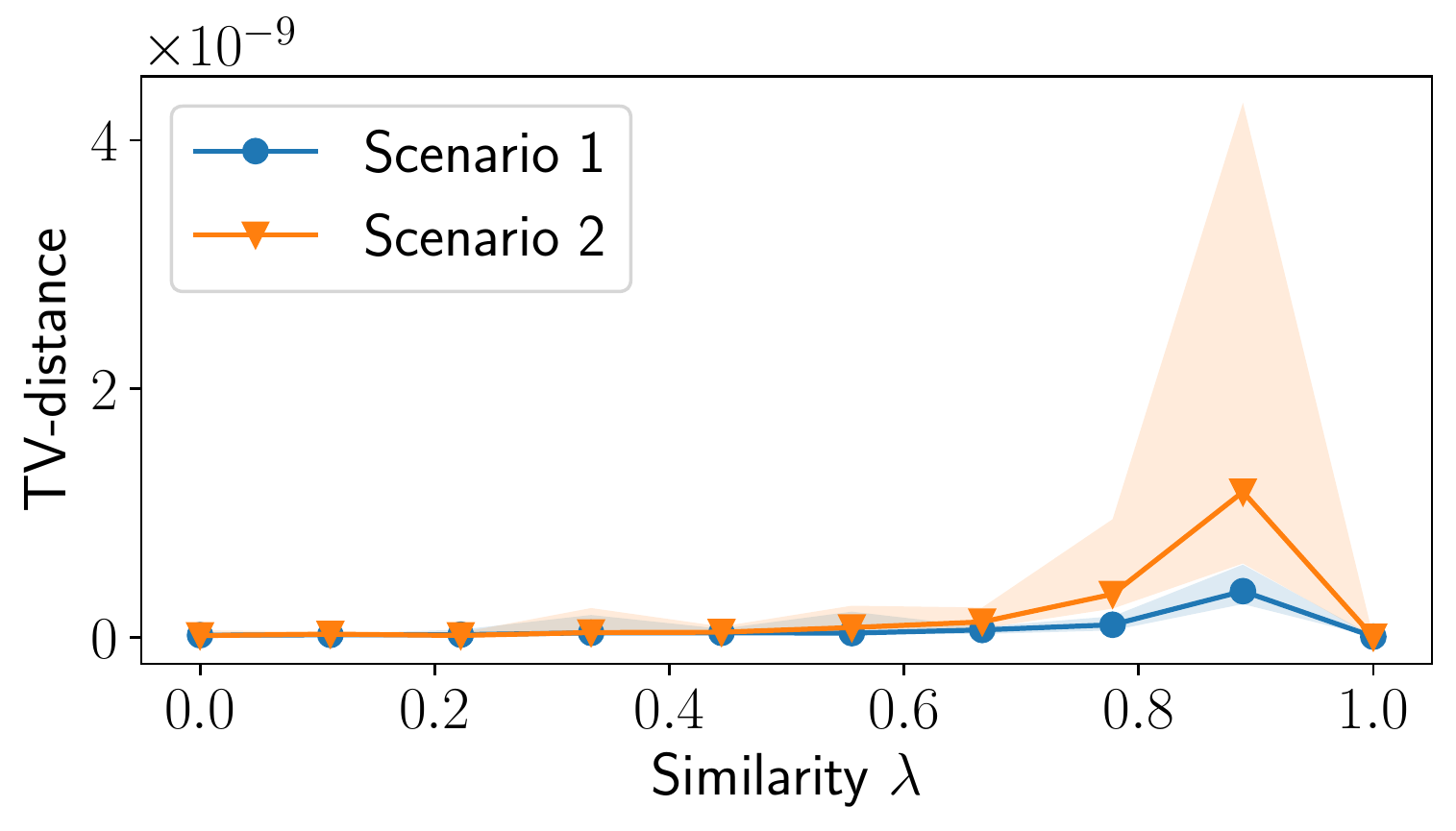} \\
        (a) & (b)
    \end{tabular}
    \negfigsp
    \caption{\label{fig:sval2} (a) Our proposed method for choosing $L$ perfectly guesses the true number of chains in two controlled scenarios $L$ for all values $\lambda$. (b) The TV-distance between the groundtruth and the learned mixture is of the order $10^{-9}$ for all $\lambda$ values. Our reconstruction algorithm $\casvd$ achieves near-perfect recovery.}
\end{figure}

\subsection{Real-world experiments}
\label{subsec:real}


 
\spara{MSNBC} Figure~\ref{fig:real-world}(a) shows the TV-distance of the input and learned 3-trail distributions  for our reconstruction algorithm $\casvd$ combined with $\expm$ for different values of $L$. The legend provides information on the number of iterations we use in the $\expm$ algorithm. We observe that as the number of EM iterations increase to 20 the TV-distance goes down. The SVD-based methods $\casvd$ and $\gkvsvd$ perform  worse than $\expm$ that finds a good local optimum on this dataset; the best solution is to use the output of  $\casvd$   as a starting point for $\expm$. This has the advantage of reducing the number of iterations of EM until convergence, which are expensive.



\begin{figure}[htbp]
    \centering
    \begin{tabular}{cc}
        \includegraphics[width=0.3\linewidth]{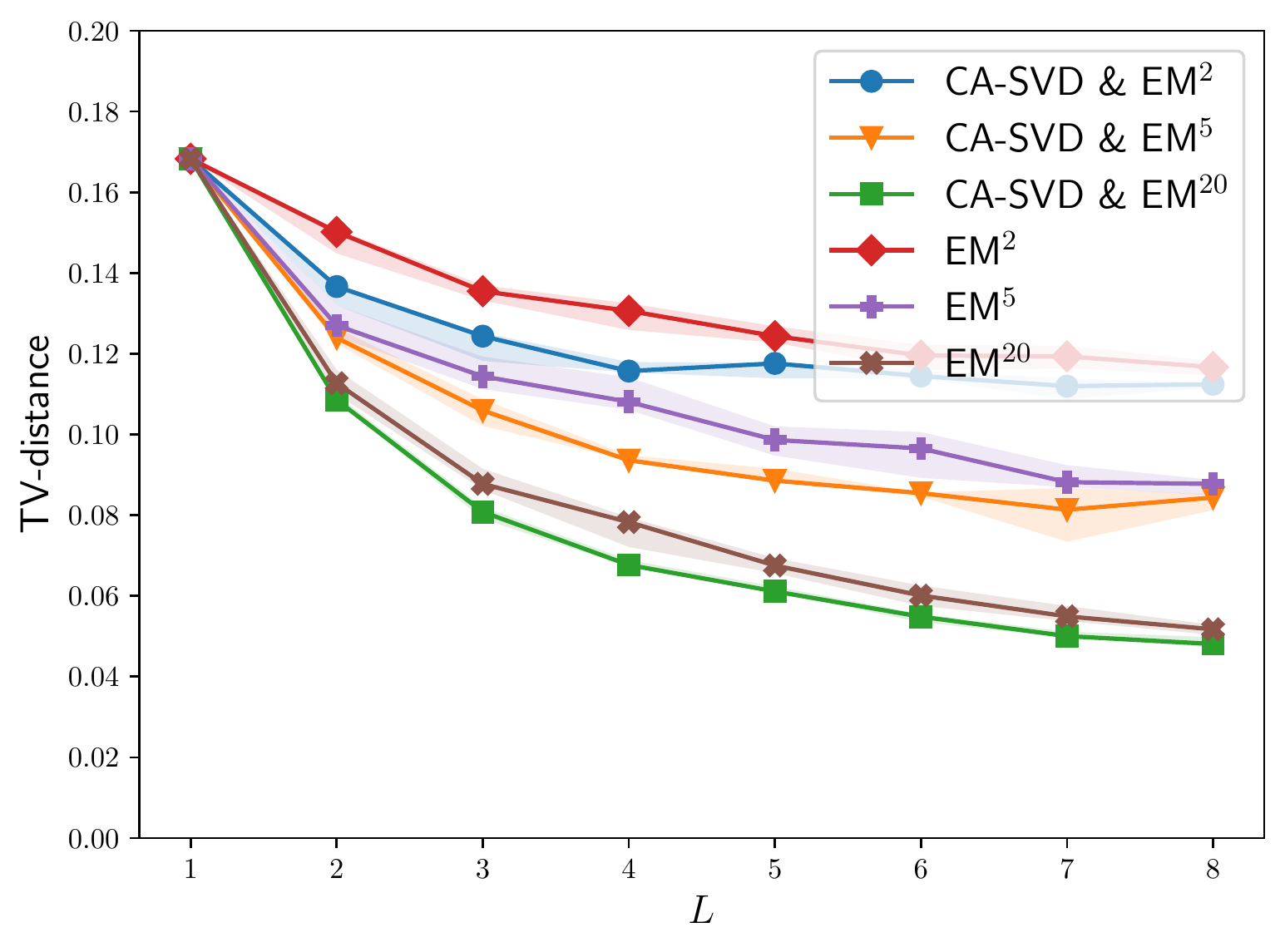} &
        \includegraphics[width=0.31\linewidth]{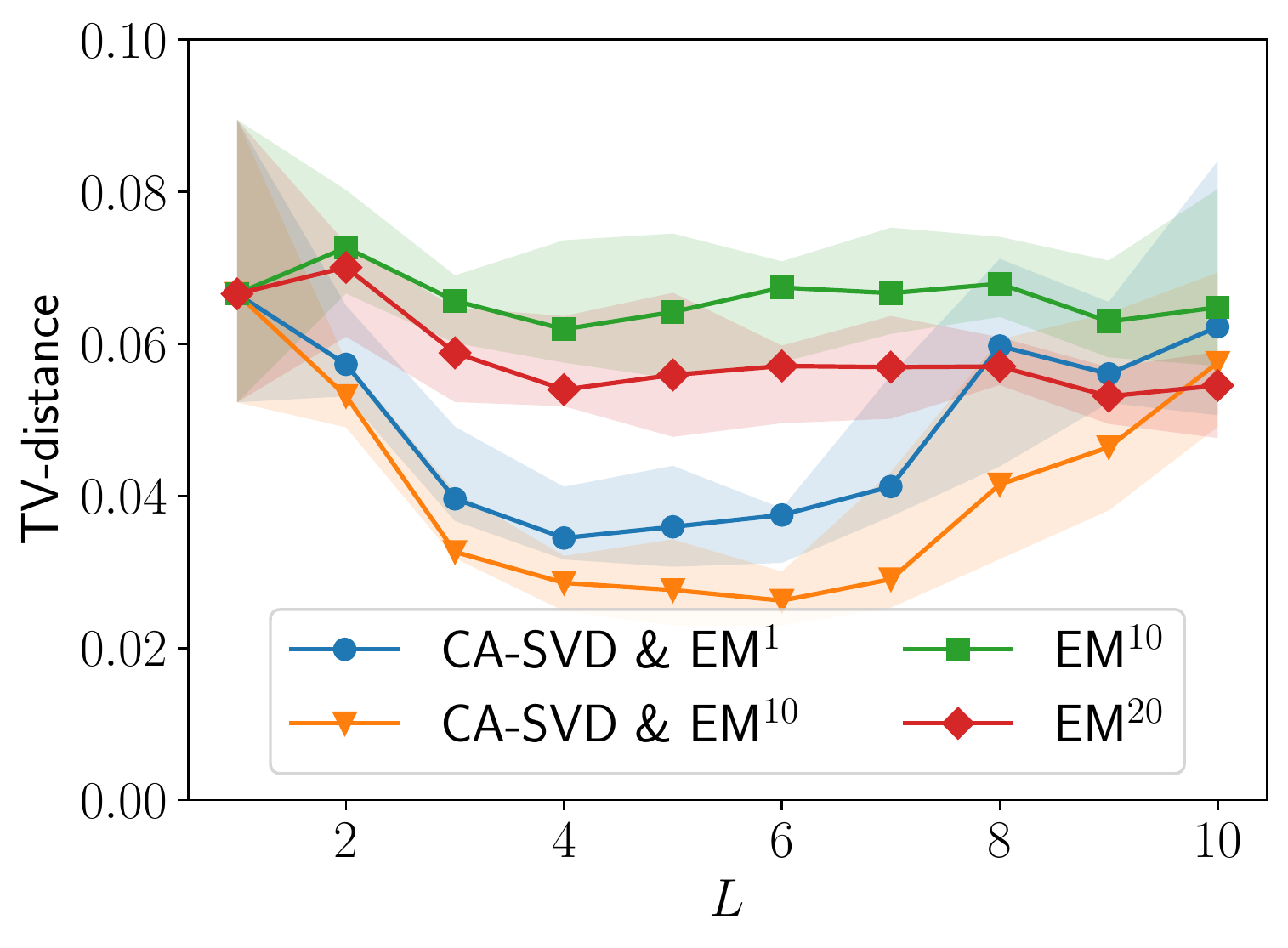} \\
        (a) & (b)
    \end{tabular}
    \negfigsp
    \caption{\label{fig:real-world} Evaluation on the {\it MSNBC} (a) and {\it Mushrooms} dataset (b).
    A combination of our algorithm  $\casvd$ and $\expm$ yields the lowest error.}
\end{figure}



\spara{Mushrooms}  
Figure~\ref{fig:real-world}(b) shows the trail-error of pure $\casvd$, $\expm$,  and $\casvd$ combined with a few iterations of $\expm$ trained on $L \le 10$.
We observe that the $\casvd$ on itself has difficulties for $L > 6$. 
However, the result is still valuable as an initial guess for $\expm$.
In particular, $\casvd$ combined with $\expm$ by far outperforms $\expm$ on a random initialization with just a few iterations.
Note that clustering the chains corresponds to a form of dimensionality reduction,
but since our goal is not to perform state-of-the-art dimensionality reduction, we only use  $\casvd$ and $\expm$.  

\hide{ 
\begin{figure*}[htbp]
    \centering
    \includegraphics[width=0.45\linewidth]{figs/sval2-2022-09-19-13-13-03-573084.pdf}
    \caption{Singular Values for different values of $L$} \label{fig:sval3}
    \fabian{keep?}
\end{figure*}
}

\section{Conclusion} 
\label{sec:concl} 
Learning mixtures of Markov chains from samples is an intriguing problem, both from a pure and applied mathematical perspective. The recent work of Gupta, Kumar and Vassilvitskii~\cite{gupta2016mixtures} provided a state-of-the-art algorithm $\gkvsvd$ that combines rigor and efficiency. Nonetheless, their algorithm comes with certain  restrictions that we discussed in this work. Most importantly, the $\gkvsvd$ cannot recover a mixture when even one chain in the mixture is disconnected. We provided novel definitions, new algorithmic ideas and technical components that enabled us to design the $\casvd$ reconstruction algorithm that can reconstruct mixtures with greater accuracy and robustly to noise. Furthermore, we provided a rule-of-thumb for choosing $L$ the number of chains in the mixture, based on the singular values of certain matrices. We performed an evaluation of our methods on synthetic datasets where we can clearly compare the methods and see their characteristics. Our method is a strict improvement of the $\gkvsvd$ algorithm, both in theory and in practice. In future work we plan to extend our algebraic analysis to prove tighter bounds that can identify $L$ with greater accuracy and explore even less restrictive conditions for exact or partial recovery with guarantees.

\bibliographystyle{alpha}
\bibliography{main}

\newpage
\clearpage

\appendix


\section{Appendix (Example)}
\label{sec:appendix2}
\input{src/appendix2}

\end{document}